\newtheorem{lemma}{\textbf{Lemma}}
\newtheorem{theorem}{\textbf{Theorem}}
\newtheorem{remark}{\textbf{Remark}}
\icmltitlerunning{ETS: Energy-Guided Test-Time Scaling for Training-Free RL Alignment}
\newcommand{\xx}{\times}
\newcommand{\bx}{\boldsymbol{x}}
\newcommand{\by}{\boldsymbol{y}}
\newcommand{\bbP}{\mathbb{P}}
\newcommand{\mE}{\mathbb{E}}
\newcommand{\TV}{D_{\rm TV}}
\newcommand{\cE}{\mathcal{E}}
\newcommand{\cR}{\mathcal{R}}
\newcommand{\cO}{\mathcal{O}}
\newcommand{\tq}{\tilde{q}}
\begin{document}

\twocolumn[
  \icmltitle{ETS: Energy-Guided Test-Time Scaling for Training-Free RL Alignment}



  \icmlsetsymbol{equal}{*}

  \begin{icmlauthorlist}
    \icmlauthor{Xiuyu Li}{equal,yyy}
    \icmlauthor{Jinkai Zhang}{equal,yyy}
    \icmlauthor{Mingyang Yi}{yyy}
    \icmlauthor{Yu Li}{yyy}
    \icmlauthor{Longqiang Wang}{yyy}
    \icmlauthor{Yue Wang}{zzz}
    \icmlauthor{Ju Fan}{yyy}
  \end{icmlauthorlist}

  \icmlaffiliation{yyy}{Renmin University of China, Beijing, China}
  \icmlaffiliation{zzz}{Zhongguancun Academy, Beijing, China}

  \icmlcorrespondingauthor{Mingyang Yi}{yimingyang@ruc.edu.cn}

  \icmlkeywords{Reinforcement Learning,Test-Time Scaling,Training-Free Inference,Energy-Guided Sampling,Language Models,Diffusion Model,ICML}

  \vskip 0.3in
]



\printAffiliationsAndNotice{\icmlEqualContribution}
\urlstyle{same}

\begin{abstract}
Reinforcement Learning (RL) post-training alignment for language models is effective, but also costly and unstable in practice, owing to its complicated training process. To address this, we propose a training-free inference method to sample directly from the optimal RL policy. The transition probability applied to Masked Language Modeling (MLM) consists of a reference policy model and an energy term. Based on this, our algorithm, Energy-Guided Test-Time Scaling (ETS), estimates the key energy term via online Monte Carlo, with a provable convergence rate. Moreover, to ensure practical efficiency, ETS leverages modern acceleration frameworks alongside tailored importance sampling estimators, substantially reducing inference latency while provably preserving sampling quality. 
Experiments on MLM (including autoregressive models and diffusion language models) across reasoning, coding, and science benchmarks show that our ETS consistently improves generation quality, validating its effectiveness and design.
The code is available at \url{https://github.com/sheriyuo/ETS}.
\end{abstract}

\section{Introduction}
Recently, Reinforcement Learning (RL) has become a central paradigm for aligning large language models (LLMs) with human intent \citep{guo2025deepseek,ouyang2022training,li2026towards} in a post-training stage \citep{kumar2025llm}. Technically, in the RL stage, the LLM is recognized as a policy model \cite{sutton1998reinforcement}, which interacts with a reward model \cite{schulman2017ppo,rafailov2024dpo, shao2024deepseekmath}. Despite their empirical success in reasoning \citep{guo2025deepseek}, image generation \citep{team2025longcat}, natural language tasks \citep{turn0academia25,luo2026sparse} and agentic workflows \citep{li2026reasoning}, this paradigm suffers from many fundamental limitations: it requires costly reward modeling \citep{ouyang2022training, bai2022training}; large-scale human preference data \citep{rafailov2024dpo,zhang2025reward}; exhibits unstable training dynamics \citep{zheng2025stabilizing,deng2025grpo} and sensitivity to hyperparameters \citep{shao2024deepseekmath}; and must be rerun whenever the reward design changes \citep{huan2025does}, leading to substantial computational and human costs.

Theoretically, the existing KL-regularized RL objective of LLM admits a closed-form solution \citep{shao2024deepseekmath,rafailov2024dpo}.
However, existing training-based approaches \citep{zhu2025llada} rely on iterative gradient-based optimization to approximate this optimal solution, rather than directly exploiting its structure. This raises a fundamental question: \emph{if the optimal RL distribution is known in closed form, can we sample from it directly at inference time, without any additional training?} By doing so, all the aforementioned issues brought by post-training are resolved. 

Inspired by this, we explore inference-time sampling directly from the optimal RL distribution. Theoretically, we show that for the general masked language modeling (MLM) framework \citep{austin2021structured}, which includes autoregressive models (ARMs) \citep{achiam2023gpt, yang2025qwen3} and diffusion language models (DLMs) \citep{nie2025large, ou2025absorb}, the optimal transition kernel decomposes into two factors: (1) The original transition kernel provided by the reference model. (2) An energy term \citep{lu2023contrastive,xu2025energybased} given by a conditional expectation of exponentiated rewards. 
This formulation enables \emph{training-free} sampling from the optimal RL policy model.  

With the transition formulation, we design Energy-Guided Test-Time Scaling (ETS) to sample from the target optimal distribution without any training process. Moreover, we prove the convergence rate measured by total variation distance \citep{duchi2016lecture} of our method. 
Notably, our method is conceptually analogous to Monte Carlo Tree Search (MCTS) \cite{yao2023tree} and can be naturally interpreted as a form of Test-Time Scaling (TTS) \cite{zhang2025survey}: additional computation at inference is used to explore multiple candidate continuations and guide the generation toward outcomes with high reward \cite{he2025self}. Similar to other TTS methods, the central challenge of our method is how to perform this exploration efficiently without incurring prohibitive latency. Here, the bottleneck is the energy term in the transition kernel. 
To resolve this, we design efficient estimators for the energy term and corresponding sampling algorithms tailored to different model families. 

To accelerate decoding, for ARMs, we adopt a small proposal model \citep{yang2025qwen3}; for DLMs, we use Fast-dLLM \citep{wu2025fast} with KV caching and parallel decoding. When combined with importance sampling \citep{tokdar2010importance}, both lightweight proposal models produce Monte Carlo unbiased energy estimators with substantially reduced latency. Theoretically, we prove these estimators yield samples converging to the target distribution.

Empirically, we evaluate ETS across both ARMs and DLMs on reasoning, coding, and science benchmarks \citep{hendrycksmath2021, rein2024gpqa, chen2021evaluating}. Our method consistently improves generation quality over standard inference and test-time scaling baselines \citep{kang2025scalable,karan2025reasoning}, surpassing post-trained RL policy without any training. Comprehensive ablations validate our sampling procedures, reward design, and acceleration strategies, demonstrating favorable quality-latency trade-offs.

\section{Related Work}
\paragraph{Reinforcement Learning (RL).} 
RL was originally proposed as a framework for trial-and-error learning in sequential decision-making and optimal control \cite{sutton1998reinforcement,mnih2015human,watkins1992q,schulman2017ppo}. Recently, RL has been brought into the LLM post-training, most notably through RLHF and its variants, to align LLM behavior with human preferences and to improve reasoning capabilities \cite{ouyang2022training,guo2025deepseek,zhang2025reward,su2025reveal}. Common RL-based post-training algorithms for LLMs include PPO, DPO, and GRPO \cite{schulman2017ppo, rafailov2024dpo, guo2025deepseek}. However, these methods suffer from high retraining costs, reliance on expensive human preference data, strong sensitivity to hyperparameters, and lack of well-defined rewards \cite{casper2023open, rafailov2024dpo}.

\paragraph{Test-Time Scaling (TTS).}
TTS improves model performance by allocating additional computation at inference time rather than updating parameters \cite{snell2025scaling}. Representative TTS techniques include Best-of-N \cite{nakano2021webgpt}, Self-Consistency \cite{wang2022self, shafayat2025can}, Beam Search \cite{sutskever2014sequence}, and Monte Carlo Tree Search (MCTS) \cite{yao2023tree, chang2025step}. Beam Search maintains multiple candidate sequences during decoding and approximates maximum a posteriori generation via width-limited pruning, while MCTS enhances reasoning accuracy by sampling diverse solution paths and selecting the most consistent or highest-scoring outcome. Similar to this paper, Power Sampling \cite{karan2025reasoning} and Quest \cite{faria2024quest} also sample from RL post-training target distribution by Metropolis–Hastings (MH) algorithm \citep{chib1995understanding}, leading to slow generation due to repeated recursive LLM queries. Moreover, \citep{balashankar2025infalign, fei2025nudging} propose to fit TTS with the target distribution of RL, but their gains heavily rely on training-based RL components rather than offering a general, training-free sampling solution.

Recently, \citep{dang2025inference,uehara2024fine,singhal2025general} proposed a similar framework to sample from the RLHF post-trained objective by intervening in the generation of trajectories. However, their methods are limited to diffusion models \citep{ho2020denoising,austin2021structured,liu2025score,chen2026t2vworldbench}. Besides, unlike our method, their methods lack a theoretical convergence rate, and cannot be applied without a verifiable reward function. More importantly, our methods are more efficient than theirs due to our acceleration methods.    

\paragraph{Acceleration.} 
The goal of inference-time acceleration is to reduce the latency while keeping the model's output distribution essentially unchanged \cite{zhou2024survey,song2026sublinear}. 
Techniques such as speculative decoding \cite{leviathan2023speculative,li2025eagle,hu2025accelerating}, quantization \cite{zheng2025empirical}, pruning \cite{ma2023llm}, and sparse attention \cite{liu2025deepseek} are widely used in LLM deployment for this purpose. 
Since the energy term in our transition kernel can be estimated by a Monte Carlo method, these acceleration methods can be integrated with our method.

\section{Background}\label{sec:pre}

\begin{figure}[t]
    \centering
    \includegraphics[width=0.6\linewidth]{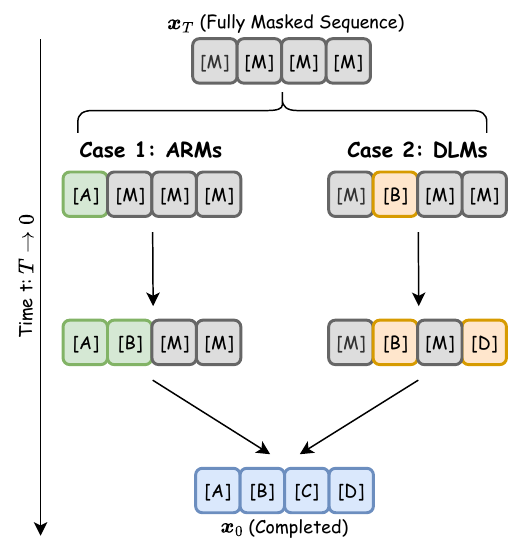}
    \caption{Unified MLM framework. Generation is modeled as a backward Markov chain from $x_T$ to $x_0$. Case 1 shows the fixed left-to-right decoding path of ARMs; Case 2 illustrates the flexible, non-sequential unmasking of DLMs.}
    \label{fig:mlm_framework}
    \vspace{-0.2in}
\end{figure}

\paragraph{Masked Language Modeling (MLM) Framework.}
We consider an MLM framework that subsumes both ARMs and DLMs \citep{austin2021structured}. Let the vocabulary be denoted by $\mathcal V$, and 
$\bx_0 = \{x_0^1, \ldots, x_0^{d_x}\} \in \mathcal V^{d_x}$
be a sequence of length $d_x$ with $\bx_{0}^{i}$ as the $i$-th token. We introduce a special token $[\text{mask}] \in \mathcal V$ and define a series of sequences of partially masked states $\{\bx_t\}_{t=0}^{T}$ with 
\begin{equation}
x_t^i =
\begin{dcases}
x_0^i, & i \notin M_t; \\
[\text{mask}], & i \in M_t.
\end{dcases}
\end{equation}
Here $M_t \subseteq \{1,\ldots,d_x\}$ is a subset denoting the set of masked positions at step $t$, satisfying $M_{s}\subset M_{t}$ for $s < t$ and $M_{T} = \{1,\cdots, d_{x}\}$. 

In this paper, we only consider the decoding stage of MLM (as shown in Figure \ref{fig:mlm_framework}) . That is, the model progressively generates $\bx_{0}$ starting from $\bx_{T}$ ($T$ is given in advance). Thus, the mask index decides the decoding order.
For ARMs, the mask index $M_t = \{d_x - t + 1, \ldots, d_x\}$ corresponds to left-to-right generation.  
For DLMs, $M_t$ is a random subset of $\{1,\ldots,d_x\}$ depending on the logits of predicted tokens.  

The decoding process is a backward Markov process with transition kernel
\begin{equation}
p(\bx_s \mid \bx_t, \by) = \prod_{i=s+1}^{t} p(\bx_{i-1} \mid \bx_i, \by)
\end{equation}
for any $s < t$, where $\by$ is a prompt or query. For MLM, the conditional probability $p(\bx_{i-1} \mid \bx_i, \by)$ is modeled by a function call of LLM.
For ARMs, the transition $p(\bx_{i-1} \mid \bx_i, \by)$ corresponds to the standard next-token prediction \citep{achiam2023gpt}. 
For DLMs, the $p(\bx_{i-1} \mid \bx_i, \by)$ is obtained by modeling the conditional distribution $p(\bx_0 \mid \bx_i, \by)$ to predict $\bx_0$, and then remask it into $\bx_{i-1}$. In this paper, for DLMs, we consider the decoding strategy in \citep{nie2025large} by unmasking the tokens with the top-$K$ predicted probabilities in each decoding step, i.e.,
\begin{equation}
|M_{i}| - |M_{i-1}| = K, \qquad K = \dfrac{d_x}{T}.
\end{equation}

\paragraph{RL Method.}
We consider a general question-answering setting. Given a query
$\by$,
the goal is to generate a response
$\bx_{0} = \{x_{0}^{1}, \cdots, x_{0}^{d_x}\} \in \mathcal V^{d_x}$.
The query--response pair $(\by, \bx_0)$ is evaluated by a reward function
$r(\by, \bx_{0}) \in \cR$, where $\cR$ may be discrete or continuous.

In this setting, the answering policy is modeled by a LLM $p(\bx_{0}\mid \by)$. Then, the RL-based post-training methods aim to solve the following
KL-regularized optimization problem:
\begin{equation}\label{eq:rlhf objective}
    \small
    \max_{p(\bx_{0}\mid \by)} \mE_{p(\bx_{0}\mid \by)} \big[r(\by, \bx_{0}) - \lambda D_{\mathrm{KL}}\big(p(\bx_{0}\mid \by)\parallel p_{\mathrm{ref}}(\bx_{0}\mid \by)\big)\big],
\end{equation}
where $p_{\mathrm{ref}}(\bx_{0}\mid \by)$ denotes a fixed reference model and
$\lambda > 0$ is a constant. As can be seen, the goal of RLHF is to maximize the expected reward under the trained policy model, while maintaining its inherent ability by regularizing it to stay close to the reference model.

\begin{restatable}{proposition}{closedform}\cite{rafailov2024dpo}\label{pro:closed-form}
The RLHF objective \eqref{eq:rlhf objective} has a closed-form solution
\begin{equation}\label{eq:closed form solution}
    \small
    \begin{aligned}
        p(\bx_{0}\mid \by) &=
        \frac{p_{\mathrm{ref}}(\bx_{0}\mid \by) \exp\!\left(\frac{r(\by, \bx_{0})}{\lambda}\right)}{C},
    \end{aligned}
\end{equation}
where $C = \sum_{\bx_{0}} p_{\mathrm{ref}}(\bx_{0}\mid \by) \exp(r(\by, \bx_{0})/\lambda)$ is a normalizing constant.
\end{restatable}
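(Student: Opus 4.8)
The plan is to rewrite the RLHF objective in \eqref{eq:rlhf objective} as a (scaled, negated) KL divergence between the candidate policy $p(\bx_{0}\mid\by)$ and the claimed optimum in \eqref{eq:closed form solution}, up to an additive term that does not depend on $p$. Once this is achieved, optimality follows immediately from the non-negativity of the KL divergence (Gibbs' inequality), which simultaneously pins down the unique maximizer. Throughout, I fix the query $\by$ and treat the problem as an optimization over probability distributions on the finite response space $\mathcal{V}^{d_x}$, so that $C = \sum_{\bx_{0}} p_{\mathrm{ref}}(\bx_{0}\mid\by)\exp(r(\by,\bx_{0})/\lambda)$ is a finite, strictly positive number.

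First I would expand the objective using $D_{\mathrm{KL}}(p\|p_{\mathrm{ref}}) = \mE_{p}[\log(p/p_{\mathrm{ref}})]$ and absorb the reward into a logarithm via $r(\by,\bx_{0}) = \lambda \log\exp(r(\by,\bx_{0})/\lambda)$. This merges the reward and regularization terms into a single log-ratio:
\begin{equation}
\mE_{p}\!\left[r - \lambda \log \frac{p}{p_{\mathrm{ref}}}\right]
= -\lambda\, \mE_{p}\!\left[\log \frac{p(\bx_{0}\mid\by)}{p_{\mathrm{ref}}(\bx_{0}\mid\by)\exp(r/\lambda)}\right].
\end{equation}
Next I introduce the candidate optimum $p^{\star}(\bx_{0}\mid\by) = p_{\mathrm{ref}}(\bx_{0}\mid\by)\exp(r/\lambda)/C$, so that the denominator inside the logarithm equals $C\,p^{\star}$. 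Splitting off the constant $\log C$, the objective becomes
\begin{equation}
-\lambda\, \mE_{p}\!\left[\log \frac{p}{p^{\star}}\right] + \lambda \log C
= -\lambda\, D_{\mathrm{KL}}\!\left(p \,\|\, p^{\star}\right) + \lambda \log C.
\end{equation}

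Since $\lambda > 0$ and $\log C$ is independent of $p$, maximizing the objective is equivalent to minimizing $D_{\mathrm{KL}}(p\|p^{\star})$. By Gibbs' inequality this divergence is non-negative and vanishes exactly when $p = p^{\star}$, so $p^{\star}$ is the unique maximizer, which is precisely \eqref{eq:closed form solution}, and the optimal value is $\lambda\log C$.

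The only real subtlety, and the step I would treat most carefully, is the bookkeeping that recasts the reward contribution as $\log(C\,p^{\star})$: one must verify that $C$ is finite (immediate here since $\mathcal{V}^{d_x}$ is finite) and identify the exact normalized form $p^{\star}$ must take for the log-ratio to collapse into a genuine KL divergence rather than a mere unnormalized ratio. As an alternative route, one could instead form the Lagrangian with a multiplier enforcing $\sum_{\bx_{0}} p = 1$, set the functional derivative in $p(\bx_{0}\mid\by)$ to zero to obtain $p \propto p_{\mathrm{ref}}\exp(r/\lambda)$, and fix the proportionality constant by normalization; strict concavity of the objective in $p$ would then certify that this stationary point is the global maximum. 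I prefer the KL-completion argument, since it yields both the maximizer and its uniqueness in a single line without invoking second-order conditions.
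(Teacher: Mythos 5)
Your proof is correct, but it takes a genuinely different route from the paper's. The paper proves Proposition~\ref{pro:closed-form} via the Lagrangian: it introduces a multiplier $\alpha$ for the constraint $\sum_{\bx_0}p(\bx_0\mid\by)=1$, sets the functional derivative $\partial\mathcal{L}/\partial p(\bx_0\mid\by)=0$ to obtain $p\propto p_{\mathrm{ref}}\exp(r/\lambda)$, and then fixes the constant by normalization --- precisely the ``alternative route'' you sketch and set aside in your final paragraph. Your KL-completion argument instead rewrites the objective as $-\lambda D_{\mathrm{KL}}(p\,\|\,p^{\star})+\lambda\log C$ and invokes Gibbs' inequality. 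The trade-off is as you say: your version delivers global optimality and uniqueness in one step, whereas the paper's first-order stationarity condition strictly speaking only identifies a critical point and leaves the verification that it is a global maximum implicit (it holds by concavity of the objective in $p$, but the paper does not state this). Your approach also makes the optimal value $\lambda\log C$ visible for free. Both arguments rely on the same finiteness and positivity of $C$, which is immediate on the finite response space $\mathcal{V}^{d_x}$, so there is no gap in either.
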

\section{Sample From the Target Distribution} \label{sec:sample}
\begin{figure*}[t]
    \centering
    \includegraphics[width=1\linewidth]{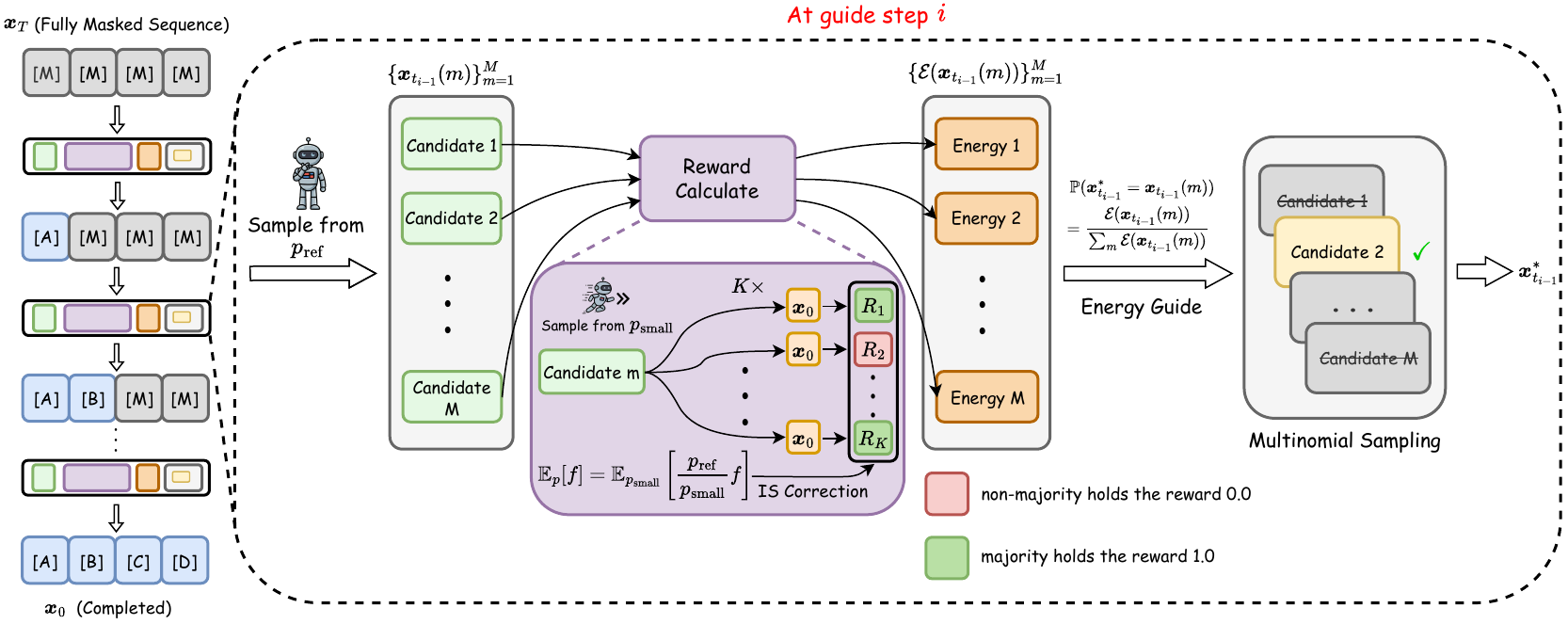}
    \caption{Overview of Energy-Guided Test-Time Scaling (ETS). ETS performs iterative guidance on the unified MLM framework. At each guidance step (zoomed-in, right), the algorithm evaluates $M$ candidates. Their associated energy $\mathcal{E}$ is estimated via Monte Carlo method using $K$ independent completions of the corresponding candidate. We utilize an aligned lightweight model $p_{\text{small}}$ with Importance Sampling (IS) correction to accelerate energy estimation while maintaining theoretical consistency with the target optimal distribution.}
    \label{fig:ETS}
\end{figure*}
In this section, we present a training-free method to sample from the RL target distribution. Our method has theoretically guaranteed convergence and is applicable to the general MLMs. 


\subsection{Energy Reweighted Backward Transition}
Under the unified MLM framework in Section~\ref{sec:pre}, the generation process follows a backward Markov chain $\bx_T \to \bx_{T-1} \to \cdots \to \bx_0$. Therefore, sampling from the optimal distribution \eqref{eq:closed form solution} reduces to specifying the backward transition kernel $p(\bx_s\mid \bx_t,\by)$ under distribution \eqref{eq:closed form solution}, for each pair of steps $s<t$. 

The following proposition derives an explicit expression for this transition by combining the reference model’s transition with the expectation of the exponential reward.
\begin{restatable}{proposition}{transitionprobabilitywc}\label{pro:transition probability with condition}
		For MLM, for any given query $\by$ and response $\bx_{0}$, we have
		\begin{equation}\label{eq:transition probability}
			\small
			\begin{aligned}
				& p(\bx_{s}\mid \bx_{t}, \by) \\
                & = \frac{p(\bx_{t}\mid \bx_{s})p(\bx_{s}\mid \bx_{0})}{p_{\rm{ref}}(\bx_{t}\mid \bx_{0})}\cdot\\
                & \quad \frac{p_{\rm{ref}}(\bx_{s}\mid \by)}{p_{\rm{ref}}(\bx_{t}\mid \by)}\frac{\mE_{p_{\rm{ref}}(\bx_{0}\mid \by, \bx_{s})}\left[\exp(\frac{r(\by, \bx_{0})}{\lambda})\right]}{\mE_{p_{\rm{ref}}(\bx_{0}\mid \by, \bx_{t})}\left[\exp(\frac{r(\by, \bx_{0})}{\lambda})\right]} \\
				& \propto p_{\rm{ref}}(\bx_{s}\mid \bx_{t}, \by)\mE_{p_{\rm{ref}}(\bx_{0}\mid \by, \bx_{s})}\left[\exp\left(\frac{r(\by, \bx_{0})}{\lambda}\right)\right].	
			\end{aligned}
		\end{equation}
\end{restatable}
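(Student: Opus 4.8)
The plan is to obtain the optimal backward kernel by a single application of Bayes' rule and then collapse every object into reference-model quantities times an energy ratio, using \eqref{eq:closed form solution}. The one structural fact I will lean on throughout is that the \emph{forward} (masking/corruption) process is fixed and policy-independent: the kernels $p(\bx_t\mid\bx_s)$, $p(\bx_s\mid\bx_0)$, $p(\bx_t\mid\bx_0)$ depend neither on the query $\by$ nor on whether $\bx_0$ was produced by the optimal or the reference policy, since masking only copies or hides tokens according to the nested sets $M_s\subset M_t$. Concretely this gives the Markov identities $p(\bx_t\mid\bx_s,\by)=p_{\mathrm{ref}}(\bx_t\mid\bx_s)=p(\bx_t\mid\bx_s)$ and, under the reference, $p_{\mathrm{ref}}(\bx_0\mid\by,\bx_s)=p(\bx_s\mid\bx_0)\,p_{\mathrm{ref}}(\bx_0\mid\by)/p_{\mathrm{ref}}(\bx_s\mid\by)$, which is what lets me transfer information between the two distributions.

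First I would write, by Bayes' rule applied to the optimal joint law of the trajectory,
\[
p(\bx_s\mid\bx_t,\by)=\frac{p(\bx_t\mid\bx_s,\by)\,p(\bx_s\mid\by)}{p(\bx_t\mid\by)}=\frac{p(\bx_t\mid\bx_s)\,p(\bx_s\mid\by)}{p(\bx_t\mid\by)},
\]
having replaced the forward factor by the shared kernel. Next I would compute the intermediate marginal of the optimal policy by marginalizing over the clean sequence and inserting the closed form \eqref{eq:closed form solution},
\[
p(\bx_s\mid\by)=\sum_{\bx_0}p(\bx_s\mid\bx_0)\,p(\bx_0\mid\by)=\frac{1}{C}\sum_{\bx_0}p(\bx_s\mid\bx_0)\,p_{\mathrm{ref}}(\bx_0\mid\by)\exp\!\Big(\tfrac{r(\by,\bx_0)}{\lambda}\Big).
\]
Recognizing $p(\bx_s\mid\bx_0)\,p_{\mathrm{ref}}(\bx_0\mid\by)=p_{\mathrm{ref}}(\bx_0\mid\by,\bx_s)\,p_{\mathrm{ref}}(\bx_s\mid\by)$ (Bayes under the reference), this collapses to $p(\bx_s\mid\by)=\tfrac1C\,p_{\mathrm{ref}}(\bx_s\mid\by)\,\mE_{p_{\mathrm{ref}}(\bx_0\mid\by,\bx_s)}[\exp(r/\lambda)]$, i.e. the reference intermediate marginal reweighted by the energy term. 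The identical computation at step $t$ produces the same expression with $C$ the common normalizer.

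Substituting both marginals into the Bayes expression, the constants $C$ cancel and I obtain
\[
p(\bx_s\mid\bx_t,\by)=p(\bx_t\mid\bx_s)\,\frac{p_{\mathrm{ref}}(\bx_s\mid\by)}{p_{\mathrm{ref}}(\bx_t\mid\by)}\cdot\frac{\mE_{p_{\mathrm{ref}}(\bx_0\mid\by,\bx_s)}[\exp(r/\lambda)]}{\mE_{p_{\mathrm{ref}}(\bx_0\mid\by,\bx_t)}[\exp(r/\lambda)]}.
\]
To match the stated form I would read $p(\bx_t\mid\bx_s)\,p_{\mathrm{ref}}(\bx_s\mid\by)/p_{\mathrm{ref}}(\bx_t\mid\by)$ as the reference backward kernel $p_{\mathrm{ref}}(\bx_s\mid\bx_t,\by)$, and expand $p(\bx_t\mid\bx_s)$ through the forward-posterior identity $p(\bx_s\mid\bx_t,\bx_0)=p(\bx_t\mid\bx_s)\,p(\bx_s\mid\bx_0)/p_{\mathrm{ref}}(\bx_t\mid\bx_0)$ to recover the displayed forward-kernel ratio; on the support the auxiliary factor $p(\bx_s\mid\bx_0)/p_{\mathrm{ref}}(\bx_t\mid\bx_0)$ depends only on the mask sizes $|M_s|,|M_t|$ and is therefore constant in $\bx_s$. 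The final proportionality then follows by discarding every factor independent of the sampled variable $\bx_s$ — the step-$t$ energy and all normalizers — yielding $p(\bx_s\mid\bx_t,\by)\propto p_{\mathrm{ref}}(\bx_s\mid\bx_t,\by)\,\mE_{p_{\mathrm{ref}}(\bx_0\mid\by,\bx_s)}[\exp(r/\lambda)]$.

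The main obstacle is not the algebra but the measure-theoretic bookkeeping underlying the single structural fact: one must argue carefully that the forward corruption kernel is genuinely shared between the optimal and reference joints (so that $p(\bx_t\mid\bx_s,\by)=p_{\mathrm{ref}}(\bx_t\mid\bx_s)$ and the reference Bayes inversion for the energy term are both legitimate), and that the forward-posterior factor is indeed $\bx_s$-constant on the support, so it can be absorbed into the proportionality rather than silently dropped.
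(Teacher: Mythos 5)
Your proposal is correct and follows essentially the same route as the paper's proof: both marginalize over $\bx_0$, insert the closed-form optimal distribution from Proposition~\ref{pro:closed-form}, and use the query-independence of the masking kernel together with Bayes' rule under the reference to produce the ratio of energy terms (the paper expands $p(\bx_s\mid\by,\bx_t)=\sum_{\bx_0}p(\bx_s\mid\bx_0,\bx_t)\,p(\bx_0\mid\by,\bx_t)$ directly, whereas you first compute the marginals $p(\bx_s\mid\by)$ and $p(\bx_t\mid\by)$ and then combine them, which is the same computation reorganized). Your closing observation that the residual forward-kernel factors are constant in $\bx_s$ on the support is precisely the bookkeeping the paper also relies on when it identifies $p_{\rm ref}(\bx_0^{U_s}\mid\by)$ with $p_{\rm ref}(\bx_s\mid\by)$, so nothing essential is missing.
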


Concretely, \eqref{eq:transition probability} expresses the target backward transition as a product of two components. 
(1) The transition kernel $p_{\rm ref}(\bx_s\mid \bx_t,\by)$ under the reference model;
(2) An energy term $\mE_{p_{\rm ref}(\bx_0\mid \by,\bx_s)}\!\left[\exp(r(\by,\bx_0)/\lambda)\right]$, which evaluates how likely the state $\bx_s$ is to lead to high-reward completions under the reference model. 

Please notice that the intermediate ratio form in \eqref{eq:transition probability} arises from rewriting the RL-optimal joint distribution over the backward trajectory and then isolating the dependence on $\bx_s$ relative to $\bx_t$; factors independent of the candidate choice of $\bx_s$ can be absorbed into the normalization constant.

\begin{remark}
    Notably, \citep{dang2025inference,uehara2024fine,singhal2025general} propose a similar transition kernel as \eqref{eq:transition probability} under the continuous-time diffusion model by continuous-time control theory \citep{aastrom2012introduction}. Compared with their results, our result \eqref{eq:transition probability} can be applied to general discrete MLM, and is directly derived from the properties of MLM. (please refer to Appendix \ref{app:prop2} for details)
\end{remark}

\subsection{Monte Carlo Method}\label{sec:mcmethod}
\begin{algorithm}[t]
    \caption{Energy-Guided Test-Time Scaling (ETS): sampling directly from the RL optimal policy.}
    \label{alg:alg1}
    \textbf{Input:} Reference model $p_{\rm{ref}}$, Guidance steps $I$, Timesteps $\{t_{0}=0,t_1=\frac {T}{I},\ldots, t_{I} = T\}$, Initialization $\bx_T$, Query $\by$, Reward function $r$, Candidates number $M$.\\
    \textbf{Output:} {Sample $\bx_{0}$.}
    \begin{algorithmic}[1]
        \FOR    {$i=I,\ldots, 1$}
        \STATE {Sampling $\{\bx_{t_{i-1}}(m)\}_{m=1}^{M}$ from $p_{\rm ref}(\bx_{t_{i-1}}\mid \by, \bx_{t_i})$}
        \STATE  {Computing $ w_{m} = \frac{\widehat{\mathcal{E}}(\by, \bx_{t_{i-1}}(m))}{\sum_{m=1}^{M}\widehat{\mathcal{E}}(\by, \bx_{t_{i-1}}(m))}
        $ under \eqref{eq:empirical estimation on exp}.}
        \STATE  {Selecting $m^{*}\sim \mathrm{Multinomial}(M, w_{1},\cdots , w_{m})$}
        \STATE {Taking $\bx_{t_{i - 1}} = \bx_{t_{i - 1}}(m^{*})$}
        \ENDFOR \\
    \end{algorithmic}
\end{algorithm}
Leveraging the transition kernel \eqref{eq:transition probability}, we can sample directly from the optimal distribution \eqref{eq:closed form solution}. To implement this, we should estimate the energy term
\begin{equation}\label{eq:energy}
\small
\mathcal{E}(\by, \bx_s)=\mE_{p_{\rm{ref}}(\bx_{0}\mid \by, \bx_s)}\left[\exp\left(\frac{r(\by, \bx_{0})}{\lambda}\right)\right],
\end{equation}
since the transition $p_{\rm{ref}}(\bx_{s}\mid \bx_{t}, \by)$ can be directly evaluated. Thus, a straightforward way to estimate $\mathcal{E}(\by, \bx_s)$ is the Monte Carlo estimation \cite{metropolis1949monte}
\begin{equation}\label{eq:empirical estimation on exp}
    \small
    \begin{aligned}
        \mathcal{E}(\by, \bx_s)\approx \widehat{\mathcal{E}}(\by,\bx_s)&=\frac{1}{K}\sum_{k=1}^K\exp\left(\frac{r(\by,\bx_0(k))}{\lambda}\right),\\\quad \{\bx_0(k)\}_{k=1}^K&\overset{i.i.d.}{\operatorname*{\sim}}p_{\mathrm{ref}}(\bx_0 \mid \by,\bx_s).
    \end{aligned}
\end{equation}
With this, we propose Algorithm~\ref{alg:alg1} to approximately sample from the optimal distribution via self-normalizing importance sampling \citep{hammersley2013monte,cardoso2022br}. Intuitively, while the global normalizing constant is intractable, self-normalizing the energies of a sampled batch recovers the relative probabilities among candidates. Sampling from this reweighted batch is equivalent to sampling from the optimal distribution restricted to these candidates.

In Algorithm \ref{alg:alg1}, we guide the generation process with the energy term in steps $t_{1}, \cdots, t_{I}$ with $1\leq I \leq T$. Increasing the number of guidance steps $I$ allows for more frequent re-alignment with the optimal policy, thereby improving the quality of the generated samples. However, this comes at the expense of higher latency, as each guidance step necessitates additional Monte Carlo estimations of the energy term. Notably, by selecting $I = 1$ and $\lambda \to 0$, the Algorithm \ref{alg:alg1} becomes Best-of-N \citep{zhang2025survey} since the normalized weights $w_m \propto \mathcal{E}(\by, \bx_{t_{i-1}}(m))$ degenerate to the largest one within the finite candidate set. 

Notably, for ETS, ARMs equipped with \emph{batching} and \emph{early-stop} mechanisms decode $M\times K$ sequences in parallel and often terminate early, yielding lower actual per-token cost and fewer generated tokens, as shown in Section~\ref{subsec:results}, Appendix~\ref{app:complexity} and \ref{app:implementation}. Similar to Best-of-N and Beam Search \citep{wolf2020transformers}, parallel generation algorithms like ETS better utilize GPU memory and FLOPs, offering greater speed advantages compared to sequential generation methods like Power Sampling \citep{karan2025reasoning}.

Compared with the existing methods \citep{dang2025inference,uehara2024fine,singhal2025general}, our method demonstrates advantages in applicability, theory, and efficiency. 
(1) The previous works primarily focus on continuous-time diffusion models, while our unified framework seamlessly covers both DLMs and ARMs. 
(2) As formally established in Proposition \ref{pro:tv distance}, we provide rigorous theoretical guarantees proving that our sampling distribution converges to the optimal target in total variation distance.
(3) We address the high latency of existing TTS using the acceleration method with bias correction in Section \ref{sec:acc}, significantly reducing inference time while maintaining performance. 
(4) Unlike methods that rely on verifiable reward functions or require fine-tuning value functions, our approach is entirely training-free and operates without such constraints.

\subsection{Sampling Error Analysis}
Algorithm~\ref{alg:alg1} is exact if we can evaluate the energy term in Proposition~\ref{pro:transition probability with condition} exactly and draw infinitely many candidates. In practice, however, we approximate the conditional expectation by $\widehat{\cE}$ (Section~\ref{sec:mcmethod}) and use a finite candidate set of size $M$ to perform the reweighting-and-resampling step. These two approximations induce a deviation between the induced output distribution $q(\bx_0\mid \by)$ of Algorithm~\ref{alg:alg1} and the target distribution $p(\bx_0\mid \by)$.

To quantify this deviation, we measure the discrepancy in total variation distance\footnote{The total variation distance between distribution $p$ and $q$ is $\TV(p\parallel q) = \frac{1}{2}\int|p(x) - q(x)|\mathrm {d}x$.} \citep{duchi2016lecture} and provide an explicit bound that captures how estimation error and finite-$M$ sampling error accumulate across the $I$ guidance steps. The following proposition formalizes this guarantee.
\begin{restatable}{proposition}{tvdistance}\label{pro:tv distance}
Suppose that for any given query $\by$ and response $\bx_{s}$, we have $|\mathcal {E}(\by, \bx_s) - \widehat {\cE}(\by, \bx_s)| \leq \epsilon$ for some $\epsilon > 0$. 
Then for the $\bx_{0}\sim q(\bx_{0}\mid \by)$ as in Algorithm \ref{alg:alg1}, we have 
\begin{equation}
    \small
    \begin{aligned}
        & \TV(q(\bx_{0}\mid \by)\parallel p(\bx_{0}\mid \by)) \\
        & \leq I\left(\frac{2\epsilon + h(\epsilon, M, \lambda, D)}{C - \epsilon - h(\epsilon, M, \lambda, D)}\right) + I\epsilon = \widetilde{\cO} \left(\frac I{\sqrt{M}}+I\epsilon\right),
    \end{aligned}
\end{equation}
where 
\begin{equation}
    \small
    h(\epsilon, M, \lambda, D) = \left(\frac{e^{\frac{D}{\lambda}} - e^{-\frac{D}{\lambda}}}{2}\right)\sqrt{\frac{2}{M}\log{\left(\frac{2}{\epsilon}\right)}}.
\end{equation}
\end{restatable}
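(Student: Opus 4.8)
The plan is to exploit the Markov structure of Algorithm~\ref{alg:alg1}: both the target chain driven by the exact kernel $p(\bx_s\mid \bx_t,\by)$ of Proposition~\ref{pro:transition probability with condition} and the algorithm's chain driven by its effective per-step kernel $q(\bx_s\mid \bx_t,\by)$ start from the same $\bx_T$ and evolve over the $I$ guidance steps $t_I,\dots,t_0$. By a standard hybrid/telescoping argument for Markov kernels — interpolating between the two chains one step at a time and invoking the data-processing property of total variation (a Markov kernel is non-expansive in $\TV$) — the global discrepancy is controlled by the sum of per-step kernel discrepancies:
\begin{equation}
\TV(q(\bx_0\mid \by)\parallel p(\bx_0\mid \by)) \le \sum_{i=1}^{I} \mE\big[\TV(q(\cdot\mid \bx_{t_i},\by)\parallel p(\cdot\mid \bx_{t_i},\by))\big],
\end{equation}
where the expectation is over the intermediate state reached along the chain. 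This reduces the problem to a single guidance step, after which summing over the $I$ steps produces the linear factor $I$ in front of both terms.

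For the single-step bound I would work directly with the normalized selection weights. The target kernel is the reweighting $p(\bx_s\mid \bx_t,\by)\propto p_{\rm{ref}}(\bx_s\mid \bx_t,\by)\,\mathcal{E}(\by,\bx_s)$ with normalizer $Z=\mathcal{E}(\by,\bx_t)$, so the target weight of a candidate is $\mathcal{E}(\by,\bx_s)/Z$; Algorithm~\ref{alg:alg1} instead assigns the empirical weight $\widehat{\mathcal{E}}(\by,\bx_s)/\hat Z_M$, where $\hat Z_M=\tfrac1M\sum_m\widehat{\mathcal{E}}(\by,\bx_s(m))$ is the self-normalizing importance-sampling estimate of $Z$. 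The two key estimates are: (i) a \emph{concentration} bound for the normalizer — since each $\exp(r(\by,\bx_0)/\lambda)$ lies in $[e^{-D/\lambda},e^{D/\lambda}]$ under the reward bound $|r|\le D$, Hoeffding's inequality over the $M$ candidates gives $|\hat Z_M-\mE_{p_{\rm{ref}}}[\widehat{\mathcal{E}}]|\le h(\epsilon,M,\lambda,D)$ with probability at least $1-\epsilon$, which combined with the hypothesis $|\widehat{\mathcal{E}}-\mathcal{E}|\le\epsilon$ yields $|\hat Z_M-Z|\le \epsilon+h$; and (ii) a \emph{perturbation} bound for the ratio, estimating $|\widehat{\mathcal{E}}/\hat Z_M-\mathcal{E}/Z|$ through a triangle inequality that splits into an $\epsilon$ numerator error and an $\epsilon+h$ normalizer error. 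Summing against $p_{\rm{ref}}$ and using $\mE_{p_{\rm{ref}}}[\mathcal{E}]=Z$ collapses these into a single normalized-weight discrepancy whose numerator is $2\epsilon+h$ and whose denominator is lower-bounded by $C-\epsilon-h$ (using $Z\ge C$ and $\hat Z_M\ge Z-\epsilon-h$). On the complementary event of mass $\le\epsilon$ I would simply bound $\TV$ by one, producing the additive $+\epsilon$ and hence the per-step bound $\tfrac{2\epsilon+h}{C-\epsilon-h}+\epsilon$. Since $h=\Theta(1/\sqrt M)$, multiplying by $I$ gives the claimed rate $\tilde{\cO}(I/\sqrt M+I\epsilon)$.

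The main obstacle is the self-normalizing importance-sampling step: unlike a plain reweighting, the algorithm's effective kernel is an expectation of a \emph{ratio} of empirical sums, which is biased and couples the two error sources inside the denominator. The delicate part is to control this ratio uniformly — keeping the empirical normalizer $\hat Z_M$ bounded away from zero (hence the $C-\epsilon-h$ denominator) while simultaneously propagating the $\epsilon$-accuracy of $\widehat{\mathcal{E}}$ and the finite-$M$ fluctuation $h$ through the normalization — and doing so on a high-probability event whose failure mass is separately accounted for. Once this single-step estimate is in hand, the telescoping over guidance steps is routine and costs only the linear factor $I$.
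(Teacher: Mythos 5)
Your proposal follows essentially the same route as the paper's proof: a hybrid/telescoping decomposition over the $I$ guidance steps using the non-expansiveness of Markov kernels in $\TV$, followed by a single-step bound that combines Hoeffding concentration of the empirical normalizer (failure probability $\epsilon$, fluctuation $h$), the assumed $\epsilon$-accuracy of $\widehat{\cE}$ in the numerator, the lower bound $C-\epsilon-h$ on the denominator, and an additive $\epsilon$ for the bad event. The only cosmetic difference is that the paper phrases the per-step discrepancy via the dual representation $\sup_{\|f\|_\infty\le 1}|\mE_p[f]-\mE_q[f]|$ (its Lemma~\ref{lem:expectation gap}), which is equivalent to the kernel $\TV$ you bound directly.
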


The upper bound reflects the fluctuation induced by finite candidate sampling, scaling as $1/M, \epsilon$ and modulated by the constant $C,D,\lambda$.
This indicates that the total variation distance grows linearly with $I$, and can be suppressed by increasing $M$ or reducing $\epsilon$. In particular, when $M\to \infty$ and $\epsilon \to 0$, the algorithm converges to the target distribution.

\begin{remark}
    Notably, our bound on the total variation distance is linearly increasing with the guidance steps $I$. This is similar to the results in the continuous diffusion model \citep{yi2023generalization,chen2023sampling,yi2024denoising,wang2025improved}, which indicate that the total error accumulates in each denoising step. However, this does not mean that taking $I = 1$ (Best-of-N decoding) presents the best sampling error. This is because we assume the guidance error $\epsilon$ is identical in each guidance step (from $\bx_{t_{i}}$ to estimate $\bx_{0}$). However, in practice, the guidance error $\epsilon$ may vary across different $\bx_{t}$, which may result in a different sampling error bound compared to ours. We leave such a more delicate bound as future work.      
\end{remark}

\subsection{Reward Design}
\label{sec:reward_design}

Estimating the energy term $\mathcal{E}(\by, \bx_s)$ in \eqref{eq:energy} requires a verifiable reward function $r(\by, \bx)$, such as correctness against ground truth or human preference scores. 
However, direct access to the ground-truth reward during inference is impractical \citep{kang2025scalable}. 
While prior work trains models to approximate it \citep{ouyang2022training,zhang2025reward}, this conflicts with our \emph{training-free} objective. 
We therefore design a training-free \textit{proxy reward} $r(\by, \bx)$ that operates without ground-truth, yet induces a reward distribution closely approximating the true one.

We adopt a self-consistency reward derived from the model itself: for each candidate, we sample $K$ completions, extract their final answers, and assign reward $1$ to answers matching the majority vote and $0$ otherwise. This mechanism favors answers that are statistically consistent across multiple trials, approximating the behavior of a ground‑truth reward.

Moreover, we examine various training-free reward candidates, including token-level confidence (logits) \cite{kadavath2022language}, predictive entropy \cite{kuhn2023semantic}, self-certainty \cite{kang2025scalable}, and self-consistency \citep{wang2022self, shafayat2025can}. As shown in Appendix \ref{app:reward candidates} and \ref{app:reward}, among these uncertainty‑based metrics, self‑consistency yields a reward distribution closest to the ground‑truth reward, providing a tight upper bound on performance attainable with true reward supervision.

\section{Acceleration Method}
\label{sec:acc}

In this section, we introduce inference-time acceleration techniques for both ARMs and DLMs. The Monte Carlo estimation of the energy term in \eqref{eq:empirical estimation on exp} incurs substantial computational overhead. To enable practical deployment, we develop efficient estimators that preserve theoretical guarantees while significantly reducing latencies.

\subsection{General Method}
Our key observation is that the Monte Carlo estimator only depends on samples drawn from $p_{\rm ref}$.
Therefore, by replacing samples from $p_{\rm ref}$ with those from a computationally cheaper proposal model, we can significantly reduce the latency while preserving the correctness of the estimator.

By importance sampling \citep{tokdar2010importance}, we know that for another small proposal model $p_{\rm{small}}$ close to $p_{\rm{ref}}$, we have 
\begin{equation}\label{eq:importance sampling}
    \small
    \begin{aligned}
        & \mathcal{E}(\by, \bx_{s}) = \mE_{p_{\rm{ref}}(\bx_{0}\mid \by, \bx_{s})}\left[\exp\left(\frac{r(\by, \bx_{0})}{\lambda}\right)\right]
        \\ & = \mE_{p_{\rm{small}}(\bx_{0}\mid \by, \bx_{s})}\left[\frac{p_{\rm{ref}}(\bx_{0}\mid \by, \bx_{s})}{p_{\rm{small}}(\bx_{0}\mid \by, \bx_{s})}\exp\left(\frac{r(\by, \bx_{0})}{\lambda}\right)\right]	\\
        & \approx \frac{1}{K}\sum_{k=1}^{K}\frac{p_{\rm{ref}}(\bx_{0}(k)\mid \by, \bx_{s})}{p_{\rm{small}}(\bx_{0}(k)\mid \by, \bx_{s})}\exp\left(\frac{r(\by, \bx_{0}(k))}{\lambda}\right) \\
        & = \hat{\mE}_{p_{\rm{ref}}(\bx_{0}\mid \by, \bx_{s})}\left[\exp\left(\frac{r(\by, \bx_{0})}{\lambda}\right)\right] \\
        & = \widehat {\mathcal{E}}_{\rm {small}}(\by, \bx_{s}),
    \end{aligned}
\end{equation}
where $\{\bx_0(k)\}_{k=1}^K \overset{i.i.d.}{\operatorname*{\sim}} p_{\rm{small}}(\bx_{0}\mid \by, \bx_{s})$. Thus, we can also estimate the energy term by sampling from a cheaper $p_{\rm{small}}$. Our acceleration method to approximate the target energy term is summarized in Algorithm \ref{alg:alg2}. 
\begin{algorithm}[t]
    \caption{ETS-IS (Importance Sampling): accelerate ETS by drawing samples from a lightweight proposal.}
    \label{alg:alg2}
    \textbf{Input:} Reference model $p_{\rm{ref}}$, Small model $p_{\rm {small}}$, Guidance steps $I$, Timesteps $\{t_{0}=0,t_1=\frac {T}{I},\ldots, t_{I} = T\}$, Initialization $\bx_T$, Query $\by$, Reward function $r$, Candidates number $M$.\\
    \textbf{Output:} {Sample $\bx_{0}$.}
    \begin{algorithmic}[1]
        \FOR    {$i=I,\ldots, 1$}
        \STATE {Sampling $\{\bx_{t_{i-1}}(m)\}_{m=1}^{M}$ from $p_{\rm ref}(\bx_{t_{i-1}}\mid \by, \bx_{t_i})$}
        \STATE {Computing $\widehat{\mathcal{E}}_{\rm {small}}(\by, \bx_{t_{i-1}}(m))$ under importance sampling method \eqref{eq:importance sampling}.}
        \STATE  {Computing $ w_{m} = \frac{\widehat{\mathcal{E}}_{\rm {small}}(\by, \bx_{t_{i-1}}(m))}{\sum_{m=1}^{M}\widehat{\mathcal{E}}_{\rm {small}}(\by, \bx_{t_{i-1}}(m))}
        $}
        \STATE  {Selecting $m^{*}\sim \mathrm{Multinomial}(M, w_{1},\cdots , w_{m})$}
        \STATE {Taking $\bx_{t_{i - 1}} = \bx_{t_{i - 1}}(m^{*})$}
        \ENDFOR \\
    \end{algorithmic}
\end{algorithm}
Moreover, we have the following proposition to analyze the approximation error in this regime.
\begin{restatable}{proposition}{secondapproximationerror}\label{pro:second approximation error}
    For any given query $\by$, noisy response $\bx_{s}$ and bounded $r(\by, \bx_0) \in [-D,D]$, if there exists a constant $L > 0$ such that $l(\bx_0) \leq L$ for all $\bx_0$, then with probability at least $1-\delta$ we have
    \begin{equation}
        \small
        \left|\cE(\by, \bx_{s})- \widehat {\cE}_{\rm {small}}(\by, \bx_{s})\right| \leq Le^{\frac D\lambda}\sqrt{\frac{\log(2/\delta)}{2K}},
    \end{equation}
    where
    \begin{equation}
        \small
        l(\bx_{0})= \frac{p_{\rm ref}(\bx_{0}\mid\by,\bx_{s})}{p_{\rm small}(\bx_{0}\mid\by,\bx_{s})}.
    \end{equation}
\end{restatable}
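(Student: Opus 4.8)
The plan is to recognize $\widehat{\cE}_{\rm small}(\by,\bx_s)$ as an empirical average of $K$ i.i.d.\ bounded random variables and then apply Hoeffding's inequality. Writing
\[
Z_k = l(\bx_0(k))\exp\!\left(\frac{r(\by,\bx_0(k))}{\lambda}\right), \qquad \{\bx_0(k)\}_{k=1}^K \overset{i.i.d.}{\sim} p_{\rm small}(\bx_0\mid\by,\bx_s),
\]
we have $\widehat{\cE}_{\rm small}(\by,\bx_s) = \frac1K\sum_{k=1}^K Z_k$. The importance-sampling identity already derived in \eqref{eq:importance sampling} shows $\mE_{p_{\rm small}}[Z_k] = \cE(\by,\bx_s)$, so $\widehat{\cE}_{\rm small}$ is unbiased and the claim reduces to controlling the deviation of a sample mean from its expectation.

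The second step is a uniform range bound on the summands. Since $l(\bx_0)$ is a ratio of nonnegative probabilities we have $l(\bx_0)\ge 0$, and by hypothesis $l(\bx_0)\le L$; together with $r(\by,\bx_0)\in[-D,D]$, which yields $\exp(r/\lambda)\in[e^{-D/\lambda},e^{D/\lambda}]$, every summand lies in $Z_k\in[0,\,Le^{D/\lambda}]$. Hence the range length is $b-a = Le^{D/\lambda}$.

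Finally I would apply Hoeffding's inequality to the i.i.d.\ variables $Z_1,\dots,Z_K$ supported on $[0,Le^{D/\lambda}]$, giving
\[
\bbP\!\left(\left|\widehat{\cE}_{\rm small}(\by,\bx_s) - \cE(\by,\bx_s)\right|\ge\tau\right)\le 2\exp\!\left(-\frac{2K\tau^2}{(Le^{D/\lambda})^2}\right).
\]
Setting the right-hand side equal to $\delta$ and solving for $\tau$ gives $\tau = Le^{D/\lambda}\sqrt{\log(2/\delta)/(2K)}$, which is precisely the stated bound. The argument is a direct invocation of a standard concentration inequality, so there is no substantive obstacle; the only point requiring care is the range computation---specifically using the nonnegativity $l\ge0$ and the positivity of the exponential to obtain the tight interval $[0,Le^{D/\lambda}]$ of length $Le^{D/\lambda}$, since it is this length (rather than a looser symmetric bound) that produces the exact constant in the claim.
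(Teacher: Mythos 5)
Your proposal is correct and follows essentially the same route as the paper: view the importance-weighted summands as i.i.d.\ random variables bounded in $[0, Le^{D/\lambda}]$, apply Hoeffding's inequality, and invert the tail bound for $\delta$, using the unbiasedness of the importance-sampling estimator to identify the mean with $\cE(\by,\bx_s)$. If anything, your write-up is slightly cleaner, since the paper's displayed Hoeffding bound drops the factor $\exp(r(\by,\bx_0(k))/\lambda)$ from the summand (writing $l(\bx_0(k))$ alone) even though the intended random variable is clearly the full product $l(\bx_0(k))\exp(r(\by,\bx_0(k))/\lambda)$, exactly as you define it.
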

This proposition shows the trade-off between latency (using $p_{\rm{small}}$) and estimation accuracy: a larger $K$ reduces the error but increases inference time, while a poorly matched proposal (large $L$) may require more samples to achieve the same precision. Then, by combining this proposition with Proposition \ref{pro:tv distance}, we obtain the following theorem.
\begin{restatable}{theorem}{tverroronsecondmethod}
     For any given query $\by$, noisy response $\bx_{s}$, bounded $r(\by,\bx_0)\in[-D,D]$ and $\bx_0 \sim q(\bx_0 \mid \by)$ as in Algorithm \ref{alg:alg2}, if there exists a constant $L > 0$ such that $l(\bx_0) \leq L$ for all $\bx_0$, then we have 
    \begin{equation}
        \small
        \begin{aligned}
            &\TV(q(\bx_{0}\mid \by)\parallel p(\bx_{0}\mid \by))  \\ &\leq I\left(\frac{2\epsilon + g(\epsilon, M,K,L, \lambda, D)}{C - \epsilon - g(\epsilon, M, K,L,\lambda, D)}\right) + 4I\exp\left(-\frac{2K\epsilon^2}{L^2e^{\frac {2D}\lambda}}\right)  \\ &= \widetilde{\cO}\left(\frac I{\sqrt M}+\frac {I}{\sqrt K}\right),
        \end{aligned}
    \end{equation}
    where 
    \begin{equation}
        \small
        \begin{aligned}
            & \epsilon = \widetilde{\cO} \left(\frac{1}{\sqrt{K}}\right), \quad l(\bx_{0})= \frac{p_{\rm ref}(\bx_{0}\mid\by,\bx_{s})}{p_{\rm small}(\bx_{0}\mid\by,\bx_{s})}, \\
            &\begin{aligned}
                g(\epsilon, M,K,L,\lambda,D) &= \left(\frac{e^{\frac{D}{\lambda}} - e^{-\frac{D}{\lambda}}}{2}\right)\sqrt{\frac{4K\epsilon^2}{ML^2e^{\frac {2D}\lambda}}} \\ &=\widetilde{\cO}\left(\frac 1{\sqrt M}\right).
            \end{aligned}
        \end{aligned}
    \end{equation}
\end{restatable}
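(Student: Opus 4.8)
The plan is to obtain the final bound as a clean composition of Proposition \ref{pro:second approximation error}, which controls the importance-sampling energy error for Algorithm \ref{alg:alg2}, and Proposition \ref{pro:tv distance}, which converts an energy-error budget $\epsilon$ into a total-variation guarantee. The key design choice is to tie the two confidence levels together: I would set the per-candidate failure probability of the importance-sampling energy estimate equal to the finite-$M$ confidence level already used in Proposition \ref{pro:tv distance}, i.e.\ take $\delta = \epsilon$. With this identification, Proposition \ref{pro:second approximation error} yields the fixed-point relation
\begin{equation}
\epsilon = L e^{\frac{D}{\lambda}}\sqrt{\frac{\log(2/\epsilon)}{2K}} \quad\Longleftrightarrow\quad \delta = 2\exp\!\left(-\frac{2K\epsilon^2}{L^2 e^{\frac{2D}{\lambda}}}\right),
\end{equation}
so that $\epsilon = \tilde{\cO}(1/\sqrt K)$, and the substitution $\log(2/\epsilon) = \tfrac{2K\epsilon^2}{L^2 e^{2D/\lambda}}$ turns the finite-$M$ term $h(\epsilon,M,\lambda,D)$ of Proposition \ref{pro:tv distance} into exactly the stated $g(\epsilon,M,K,L,\lambda,D)$.

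First I would define the good event. Let $A_i$ be the event that at guidance step $i$ the importance-sampling estimate obeys $|\cE(\by,\bx_{t_{i-1}}) - \widehat{\cE}_{\rm small}(\by,\bx_{t_{i-1}})| \le \epsilon$ for the retained candidate, and let $A = \bigcap_{i=1}^{I} A_i$. On $A$, Algorithm \ref{alg:alg2} is exactly an instance of Algorithm \ref{alg:alg1} run with a valid deterministic energy-error budget $\epsilon$, so Proposition \ref{pro:tv distance} applies verbatim to the conditional output law $q_A$, giving a leading contribution $I\big(\tfrac{2\epsilon+g}{C-\epsilon-g}\big)$ together with the finite-$M$ failure budget already accounted for inside that proposition, where $h$ has been rewritten as $g$ via the substitution above.

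Next I would propagate the randomness through the total-variation distance. Writing $q = \mathbb{P}(A)\,q_A + \mathbb{P}(A^c)\,q_{A^c}$ and using the convexity/triangle inequality for TV together with the trivial bound $\TV(q_{A^c}\parallel p)\le 1$, I obtain $\TV(q\parallel p) \le \TV(q_A\parallel p) + \mathbb{P}(A^c)$. It then remains to bound $\mathbb{P}(A^c)$. There are two failure mechanisms at each of the $I$ steps: the finite-$M$ self-normalized resampling concentration (confidence $\epsilon$), and the finite-$K$ importance-sampling energy estimate (confidence $\delta=\epsilon$ from Proposition \ref{pro:second approximation error}). A union bound over the $I$ steps and the two mechanisms gives a total failure budget of $I\epsilon + I\delta = 2I\delta = 4I\exp(-2K\epsilon^2/(L^2 e^{2D/\lambda}))$, which is precisely the additive term in the statement.

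Finally I would assemble the pieces and read off the rate: collecting the good-event term $I(\tfrac{2\epsilon+g}{C-\epsilon-g})$ and the failure term $4I\exp(-2K\epsilon^2/(L^2 e^{2D/\lambda}))$ produces the displayed inequality, and substituting $\epsilon = \tilde{\cO}(1/\sqrt K)$, $g = \tilde{\cO}(1/\sqrt M)$, while noting the failure term is $\tilde{\cO}(I/\sqrt K)$, collapses everything to $\tilde{\cO}(I/\sqrt M + I/\sqrt K)$. The main obstacle I anticipate is the bookkeeping that makes $h\mapsto g$ exact and keeps the failure terms clean: one must check that the confidence level of the finite-$M$ concentration inside Proposition \ref{pro:tv distance} can be consistently set to $\delta=\epsilon$, and that the two per-step failure budgets combine into the single exponential term without introducing spurious cross terms or a stray $I\epsilon$ that would spoil the stated $\tilde{\cO}(I/\sqrt M + I/\sqrt K)$ form.
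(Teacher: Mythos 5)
Your proposal is correct and follows essentially the same route as the paper: it combines Proposition~\ref{pro:second approximation error} (Hoeffding for the importance-sampling estimator, giving per-step failure probability $2\exp(-2K\epsilon^2/(L^2e^{2D/\lambda}))$) with the machinery of Proposition~\ref{pro:tv distance}, matching the two confidence levels so that $h$ becomes exactly $g$ and the two per-step failure budgets sum to $4I\exp(-2K\epsilon^2/(L^2e^{2D/\lambda}))$, then sets $\epsilon=\tilde{\cO}(1/\sqrt K)$ to read off the rate. The only real difference is bookkeeping: the paper performs the good/bad-event split per step inside the expectation-gap argument via indicators $\mathbf{1}_{E_s}+\mathbf{1}_{E_s^c}$ rather than globally conditioning the output law on $\bigcap_i A_i$, which sidesteps the (fixable) subtlety that conditioning on the good event perturbs the candidate distributions so that Proposition~\ref{pro:tv distance} does not apply literally ``verbatim'' to $q_A$.
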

By choosing $K$ sufficiently large, Algorithm \ref{alg:alg2} can recover the same asymptotic accuracy as Algorithm \ref{alg:alg1}.

\subsection{Acceleration for ARMs}
\label{subsec:acc-ar}

For ARMs, we adopt a \emph{scale-aligned} lightweight model as the proposal distribution $p_{\rm small}$ to accelerate energy estimation. 
Specifically, we use Qwen3 lightweight models that share the same tokenization with the reference model, which are naturally suitable as $p_{\rm small}$ due to their similarity. 
Substituting this choice into Algorithm~\ref{alg:alg2} yields the ETS-IS for ARMs, providing a favorable trade-off between computational efficiency and estimator variance.

We note that alternative approaches like quantization and speculative decoding face practical limitations. 
Quantized models do not provide operator-level speed advantages at our computational scale, while the speculative decoding method EAGLE-3 \citep{li2025eagle} does not support our batching techniques (Appendix \ref{app:implementation}) due to tree attention and integration with vLLM \citep{kwon2023efficient} inference frameworks. 
Our ETS-IS approach with distilled models balances implementation feasibility with performance benefits, and we leave the integration with the other acceleration methods as future work \cite{su2026tablecache,tang2026minedraft}.

\subsection{Acceleration for DLMs}
\label{subsec:acc-dlm}
Unlike ARMs, no small DLM aligned with LLaDA-8B-Instruct is available to serve as $p_{\rm small}$. Therefore, We adopt Fast‑dLLM \cite{wu2025fast} as the acceleration scheme for DLMs. 
Fast‑dLLM accelerates inference through two key mechanisms:
(1) A block‑wise approximate KV cache that reuses similar activations across steps in bidirectional attention.
(2) Confidence‑aware parallel decoding, which selects high‑confidence tokens via a global threshold to increase throughput while preserving quality.
We treat the distribution induced by Fast‑dLLM as the proposal distribution $p_{\rm small}$ and apply importance sampling to correct the energy estimation, yielding ETS‑IS for DLMs.

\section{Experiments}

\subsection{Experimental Setup}
\paragraph{Evaluation.}
We evaluate our methods on a standard suite of reasoning benchmarks covering mathematics (MATH500, GSM8K), coding (HumanEval), and STEM (GPQA). All methods and baselines are evaluated in a single-shot (pass@1) setting, i.e., based on a single final response.
\begin{itemize}
    \item \textbf{MATH500:}
    The MATH dataset \citep{hendrycksmath2021} consists of competition-level mathematics problems spanning seven categories, containing 12{,}500 problems with 7{,}500 training problems and 5{,}000 test problems. MATH500 is a randomly selected subset of 500 test problems standardized by OpenAI.

    \item \textbf{GSM8K:}
    GSM8K \citep{cobbe2021training} is a collection of grade-school-level math word problems designed to evaluate multi-step numerical reasoning. The dataset consists of 8{,}500 problems in total, with 7{,}473 training problems and 1{,}319 test problems. 

    \item \textbf{HumanEval:}
    HumanEval \citep{chen2021evaluating} is a benchmark of 164 handwritten programming problems covering algorithms, reasoning, mathematics, and language comprehension. Each problem is associated with an average of 7.7 unit tests, where solving the problem corresponds to passing all unit tests.

    \item \textbf{GPQA:}
    GPQA \citep{rein2024gpqa} is a dataset of multiple-choice science questions, designed to require advanced domain reasoning. We evaluate on the GPQA Diamond subset, which consists of 198 questions representing the highest-quality portion of the dataset.
\end{itemize}

\paragraph{Models.}
To demonstrate the universality of our method across different generative paradigms, we conduct experiments on both ARMs and DLMs. For ARMs, we use Qwen3-1.7B and Qwen3-8B \citep{yang2025qwen3} in the non-thinking mode as backbone models; For DLMs, we use LLaDA-8B-Instruct \citep{nie2025large} as the backbone model (LLaDA is only available in 8B).

For all benchmarks, we use lm-eval \citep{eval-harness} to evaluate both ARMs and DLMs. We refer readers to Appendix \ref{app:hyperparameter} for more detailed hyperparameters. 

\paragraph{Baseline.}
The compared baseline methods are categorized into three groups: (1) The standard inference method Base; (2) The TTS methods Beam Search \citep{wolf2020transformers}, Best-of-N and Power Sampling \citep{karan2025reasoning}; (3) The RL baseline from Verl \citep{sheng2024hybridflow} and LLaDA-1.5 \citep{zhu2025llada}. All of our experiments are reproduced on 4 $\xx$ H20-3e-141GB GPUs and match the existing SOTA \cite{karan2025reasoning, zhu2025llada}. We refer readers to Appendix \ref{app:baseline} for more detailed settings.\footnote{We do not use ETS-IS for Qwen3-1.7B model since its inference latency is low enough.}


\subsection{Main Results}
\label{subsec:results}
We present our main results in Table~\ref{tab:results}, which reports accuracy and average latency across tasks. Three key observations emerge:
\begin{enumerate}
    \item Across different model families, our ETS achieves \emph{substantial} and \emph{universal} boosts in accuracy across different reasoning tasks, consistently matching and even surpassing the performance of the post-trained RL policy (GRPO) without any parameter updates.
    \item The acceleration method speeds up the sampling process with slight performance degradation.
    \item The latency of our ETS is comparable to the standard TTS methods, e.g., Best-of-N, while achieving significantly better accuracy.
\end{enumerate}
All these results verify the effectiveness and efficiency of ETS methods. Notably, due to the inherently limited parallel decoding speed of LLaDA, its TTS methods incur substantially higher latency than ARMs. Even under this constraint, our acceleration method provides a meaningful reduction in latency while preserving competitive performance.
\begin{table}[t]
\centering
\caption{ETS (ours) outperforms other TTS methods and GRPO across model families and tasks. We benchmark the performance of ETS on MATH500, GSM8K, GPQA and HumanEval. Reported accuracies are the best achieved within constrained latency (time and accuracy averaged across datasets), with the best values in each setting bolded.}
\label{tab:results}
\resizebox{\linewidth}{!}{
\centering
\begin{tabular}{lcccccc}
\toprule
\multicolumn{1}{l|}{Methods}       & MATH500    & GSM8K      & GPQA       & HumanEval  & Avg        & Time      \\
\midrule
\multicolumn{7}{c}{\textit{Qwen3-1.7B}} \\
\midrule
\multicolumn{1}{l|}{Base}           & 37.2      & 69.29      & \bf{29.80} & 39.63      & 43.98      & 1$\xx$ \\
\multicolumn{1}{l|}{Beam Search}    & 45.8      & 70.81      & 26.77      & 46.34      & 47.43      & 1.59$\xx$ \\
\multicolumn{1}{l|}{Best-of-N}      & 58.8      & 81.58      & 27.78      & 39.63      & 51.70      & 3.47$\xx$ \\
\multicolumn{1}{l|}{Power Sampling} & 50.6      & -          & 26.30      & 44.51      & -          & 40.9$\xx$ \\
\multicolumn{1}{l|}{ETS (ours)}     & \bf{60.4} & \bf{81.88} & \bf{29.80} & \bf{46.95} & \bf{54.76} & 7.18$\xx$ \\
\multicolumn{1}{l|}{GRPO}           & 52.8      & 71.57      & 25.25      & 42.68      & 48.08      & 1$\xx$    \\
\midrule
\multicolumn{7}{c}{\textit{Qwen3-8B}}                                                                      \\
\midrule
\multicolumn{1}{l|}{Base}           & 60.0      & 89.39      & 34.34      & 58.54      & 60.57      & 1$\xx$    \\
\multicolumn{1}{l|}{Beam Search}    & 61.2      & 89.46      & 32.83      & 69.51      & 63.25      & 1.99$\xx$ \\
\multicolumn{1}{l|}{Best-of-N}      & 65.2      & 94.09      & \bf{38.89} & 67.07      & 66.31      & 5.16$\xx$ \\
\multicolumn{1}{l|}{Power Sampling} & 70.8      & -          & 28.80      & \bf{71.91} & -          & 41.8$\xx$ \\
\multicolumn{1}{l|}{ETS (ours)}     & \bf{72.4} & \bf{94.24} & 38.38      & 71.34      & \bf{69.34} & 7.35$\xx$ \\
\multicolumn{1}{l|}{ETS-IS (ours)}  & 66.2      & 91.81      & \bf{38.89} & 68.90      & 66.45      & 5.26$\xx$ \\
\multicolumn{1}{l|}{GRPO}           & 69.2      & 90.98      & 37.88      & 65.85      & 65.98      & 1$\xx$    \\
\midrule
\multicolumn{7}{c}{\textit{LLaDA-8B-Instruct}}                                                       \\
\midrule
\multicolumn{1}{l|}{Base}           
& 42.2  & 80.36  & 28.28  & 35.98 & 46.71 & 1$\xx$    \\
\multicolumn{1}{l|}{Beam Search}    
& 42.8      & 82.71      & 27.78      & 38.41      & 47.93 &  17.6$\xx$ \\
\multicolumn{1}{l|}{Best-of-N}      
& 44.2      & 85.82      & 28.28 & 36.59      & 48.90 &  17.4$\xx$         \\
\multicolumn{1}{l|}{ETS (ours)}     
& \bf{45.6} & \bf{85.97} & \bf{32.83} & \bf{39.63} & \bf{51.01} &  30.6$\xx$         \\
\multicolumn{1}{l|}{ETS-IS (ours)}  
& 43.8 & 85.82  &  28.79  & 39.02 & 49.36 &    16.1$\xx$       \\
\multicolumn{1}{l|}{VRPO}           
& 40.2      & 82.03      & 23.74      & \bf{39.63} & 46.40 & 1$\xx$    \\
\bottomrule
\end{tabular}}
\end{table}

\subsection{Ablation Study}
\label{subsec:ablation}
We ablate key design choices of our method: total sample count $M\times K$, guidance steps $I$, and the trade-off between latency and accuracy. 
All other hyperparameters follow the settings in Section~\ref{subsec:results}. 
We present results on Qwen3‑8B as a representative ARM; additional ablations on DLM‑specific design choices, reward design, temperature sensitivity, and generation length are provided in Appendix~\ref{app:results}.
\paragraph{Total Samples.}
Figure \ref{fig:mk} demonstrates that appropriate $M\times K$ settings yield better results while retaining low latency. 
In most cases, both accuracy and latency increase as $M \times K$ grows. This is consistent with our theoretical results (Proposition \ref{pro:tv distance}). Moreover, 
we observe that scaling $M$ is more effective for improving accuracy than scaling $K$. According to our empirical observation, selecting $K = 3$ strikes a good balance between accuracy and efficiency.
\begin{figure}[t]
    \centering
    \includegraphics[width=\linewidth]{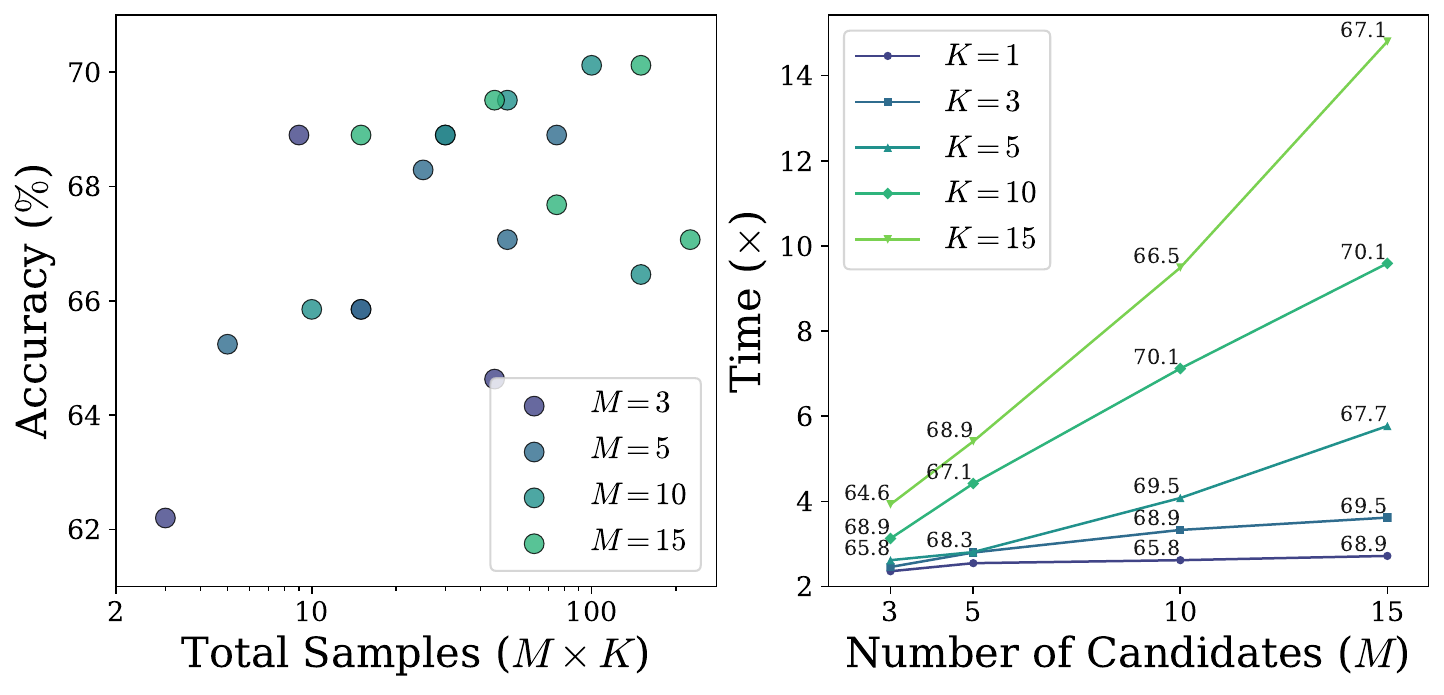}
    \caption{Effect of total samples on ETS. We ablate the total samples with Qwen3-8B and plot HumanEval accuracies (left) with corresponding latencies (right) for various sample counts.}
    \label{fig:mk}
\end{figure}
\paragraph{Guidance Steps.}
In our Algorithm \ref{alg:alg2}, the number of guidance steps is determined by $I$, where a larger $I$ leads to stronger guidance but higher latency. To see its influence, we vary $I$ in Figure \ref{fig:i}, which shows that scaling the guidance steps $I$ yields proportional improvements in accuracy alongside increased latency. Owing to this trade-off, we suggest $I = 4$ or $8$ in practice to balance latency and accuracy. Moreover, the empirical observations verify that guiding the intermediate decoding steps indeed improves the performance of the reasoning process.  
\begin{figure}[t]
    \centering
    \includegraphics[width=\linewidth]{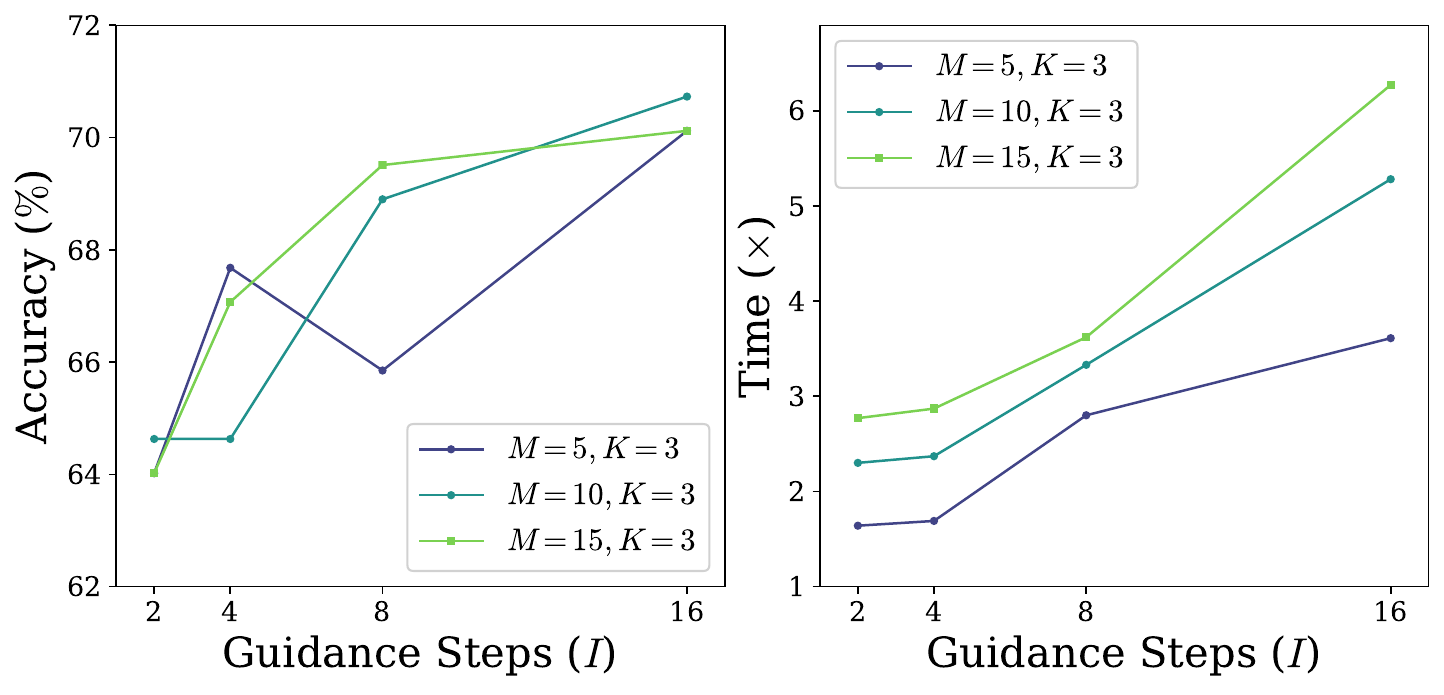}
    \caption{Effect of guidance steps on ETS. We evaluate Qwen3-8B on HumanEval (left) with corresponding latencies (right) are reported under various guidance steps.}
    \label{fig:i}
\end{figure}
\paragraph{Time and Accuracy.}
We compare ETS with the other TTS methods regarding the trade-off between latency and accuracy. As shown in Figure \ref{fig:time_acc}, ETS achieves superior accuracy at comparable latency to other TTS methods. This advantage is pronounced as the computational budget increases.
Interestingly, we find that the performance of baseline TTS methods declines significantly with more samples due to verification noise, failing to utilize additional computation effectively \cite{chow2024inference}. However, unlike these methods, which naively scale the number of candidates $M$, ETS concurrently scales the prediction range $K$ to produce a smaller set of higher-quality reasoning paths, leading to more robust verification. 
Moreover, we summarize a practical guideline for our ETS as follows: for constrained budgets, ETS-IS offers efficient initial gains; for larger budgets, standard ETS achieves higher peak performance.
\begin{figure}[t]
    \centering
    \includegraphics[width=0.75\linewidth]{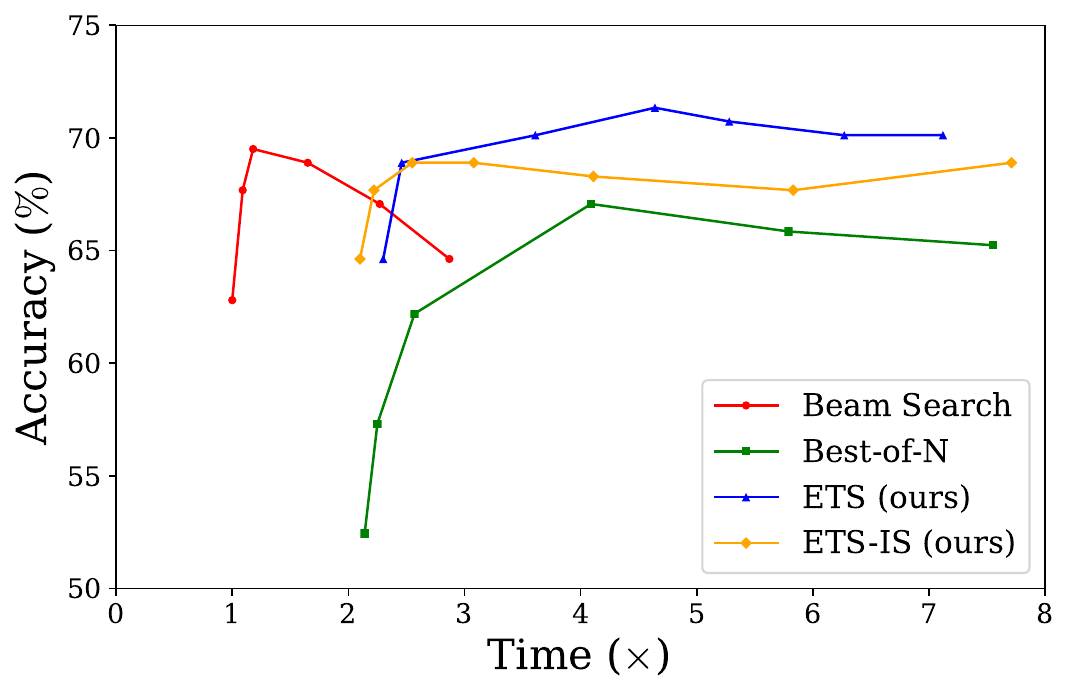}
    \caption{Comparisons between TTS methods. We ablate different latencies and plot corresponding Humaneval accuracies with Qwen3-8B, for various training-free TTS methods.}
    \label{fig:time_acc}
\end{figure}

\section{Conclusion}
In this paper, we present ETS, a training‑free method for sampling directly from the optimal RL policy in masked language models. By formulating the sampling process as a Monte Carlo estimation of an energy term, ETS ensures provable convergence to the target distribution and is further accelerated through modern inference frameworks coupled with tailored importance sampling estimators. Experiments across autoregressive and diffusion models on reasoning and coding benchmarks show that ETS exceeds post-trained RL policy, offering an efficient alternative to costly post-training alignment. This work establishes a practical, training‑free pathway toward reward‑aware generation.

\section*{Acknowledgments}

This work was supported by NSF of China Projects (No.62506365).

\section*{Impact Statement}

This work demonstrates that ETS enables training-free alignment by directly sampling from the optimal reinforcement learning policy, offering a cost-effective and stable alternative to traditional RL post-training methods. Our framework establishes a foundation for future research on TTS and efficient inference in generative models. This can democratize alignment techniques for resource-constrained settings and reduce the environmental footprint of LLM deployment.


\bibliography{example_paper}
\bibliographystyle{icml2026}

\newpage
\appendix
\onecolumn
\section{Proofs}
\subsection{Proof of RL Optimal Distribution}
\label{app:prop1}

\closedform*
\begin{proof}
We consider the KL-regularized RL objective
\begin{equation}
\label{eq:rl_objective_appendix}
\small
\max_{p(\bx_0 \mid \by)} \mathbb{E}_{p(\bx_0 \mid \by)} \big[r(\by, \bx_0) - \lambda D_{\mathrm{KL}} \big(p(\bx_0 \mid \by)\;\|\;p_{\mathrm{ref}}(\bx_0 \mid \by)\big)\big].
\end{equation}

Expanding the expectation and the KL divergence, the objective can be written as
\begin{equation}
\label{eq:rl_objective_expanded}
\small
\begin{aligned}
\mathcal{J}(p)&=\sum_{\bx_0}p(\bx_0 \mid \by)\, r(\by, \bx_0)-\lambda\sum_{\bx_0}p(\bx_0 \mid \by)\log\frac{p(\bx_0 \mid \by)}{p_{\mathrm{ref}}(\bx_0 \mid \by)}.
\end{aligned}
\end{equation}

We maximize $\mathcal{J}(p)$ subject to the normalization constraint
$\sum_{\bx_0} p(\bx_0 \mid \by) = 1$.
Introducing a Lagrange multiplier $\alpha$, we obtain the Lagrangian
\begin{equation}
\small
\begin{aligned}
\mathcal{L}(p, \alpha) &= \sum_{\bx_0} p(\bx_0 \mid \by)\, r(\by, \bx_0) - \lambda \sum_{\bx_0} p(\bx_0 \mid \by) \log p(\bx_0 \mid \by) \\ &\quad + \lambda \sum_{\bx_0} p(\bx_0 \mid \by) \log p_{\mathrm{ref}}(\bx_0 \mid \by) + \alpha \left( \sum_{\bx_0} p(\bx_0 \mid \by) - 1 \right).
\end{aligned}
\end{equation}

Taking the functional derivative with respect to $p(\bx_0 \mid \by)$ and setting it to zero yields
\begin{equation}
\small
\begin{aligned}
\frac{\partial \mathcal{L}}{\partial p(\bx_0 \mid \by)} &= r(\by, \bx_0) - \lambda (\log p(\bx_0 \mid \by) + 1 ) + \lambda \log p_{\mathrm{ref}}(\bx_0 \mid \by) + \alpha = 0 .
\end{aligned}
\end{equation}

Solving for $p(\bx_0 \mid \by)$ gives
\begin{equation}
\small
\begin{aligned}
p(\bx_0 \mid \by) = \exp\!\left(\frac{\alpha}{\lambda} - 1\right) p_{\mathrm{ref}}(\bx_0 \mid \by) \exp\!\left(\frac{r(\by, \bx_0)}{\lambda}\right)
&= \frac{p_{\mathrm{ref}}(\bx_0 \mid \by) \exp\!\left(\frac{r(\by, \bx_0)}{\lambda}\right)}{C},
\end{aligned}
\end{equation}
where $C = \frac {1}{\exp \left(\frac{\alpha}{\lambda} - 1\right)}$ is a normalization constant.

Imposing the constraint $\sum_{\bx_0} p(\bx_0 \mid \by) = 1$, we obtain
\begin{equation}
\small
1 =  \frac{\sum_{\bx_0} p_{\mathrm{ref}}(\bx_0 \mid \by) \exp\!\left(\frac{r(\by, \bx_0)}{\lambda}\right)}{C},
\end{equation}
which implies
\begin{equation}
\small
C = \sum_{\bx_0} p_{\mathrm{ref}}(\bx_0 \mid \by) \exp\!\left(\frac{r(\by, \bx_0)}{\lambda}\right).
\end{equation}

Then we prove our conclusion.
\end{proof}

\subsection{Proof of Backward Transition}
\label{app:prop2}

\paragraph{Proposition 2.} \emph{For both ARMs and DLMs, for any given query $\by$ and response $\bx_{0}$, we have \begin{equation}
        \small
        \begin{aligned}
            p(\bx_{s}\mid \bx_{t}, \by) 
            & = \frac{p(\bx_{t}\mid \bx_{s})p(\bx_{s}\mid \bx_{0})}{p_{\rm{ref}}(\bx_{t}\mid \bx_{0})}\frac{p_{\rm{ref}}(\bx_{s}\mid \by)}{p_{\rm{ref}}(\bx_{t}\mid \by)}\frac{\mE_{p_{\rm{ref}}(\bx_{0}\mid \by, \bx_{s})}\left[\exp(\frac{r(\by, \bx_{0})}{\lambda})\right]}{\mE_{p_{\rm{ref}}(\bx_{0}\mid \by, \bx_{t})}\left[\exp(\frac{r(\by, \bx_{0})}{\lambda})\right]} \\
            & \propto p_{\rm{ref}}(\bx_{s}\mid \bx_{t}, \by)\mE_{p_{\rm{ref}}(\bx_{0}\mid \by, \bx_{s})}\left[\exp\left(\frac{r(\by, \bx_{0})}{\lambda}\right)\right].	
        \end{aligned}
    \end{equation}}
\begin{proof}
Let $U_{s}$ to denote the index of unmasked tokens in $\bx_{s}$ with complement $M_{s}$. By construction of the diffusion process, the clean sequence $\bx_0$ agrees with $\bx_s$ on the unmasked positions, i.e., $\bx_0^{U_s} = \bx_s^{U_s}$. Consequently, conditioning on $\bx_s$ is equivalent to conditioning on the fixed values of $\bx_0^{U_s}$ together with a noise pattern on $M_s$, i.e.,
\begin{equation}
\small
    p_{\rm ref}(\bx_0 \mid \by, \bx_s)
= p_{\rm ref}(\bx_0^{M_s} \mid \by, \bx_0^{U_s} = \bx_s^{U_s}) = p_{\rm ref}(\bx_0 \mid \by, \bx_0^{U_s}),
\end{equation}
Similarly, we note that $p_{\rm {ref}}(\bx_0^{U_s}\mid \by) = p_{\rm ref}(\bx_s^{U_s}\mid \by) = p_{\rm ref}(\bx_s\mid \by)$.

Due to property of total probability and the Markov property of $\bx_{t}$, 
	\begin{equation}\label{eq:decomp without condition}
		\small
		\begin{aligned}
			p(\bx_{s}\mid \by, \bx_{t}) & = \sum_{\bx_{0}}p(\bx_{s}, \bx_{0}\mid \by, \bx_{t}) \\
			& = \sum_{\bx_{0}}p(\bx_{s} \mid \bx_{0}, \bx_{t})p(\bx_{0}\mid \by, \bx_{t}) \\ 
			& = \sum_{\bx_{0}^{M_{s}}}\sum_{\bx_{0}^{U_{s}}}\frac{p(\bx_{0}, \bx_{s}, \bx_{t})}{p(\bx_{0}, \bx_{t})}\frac{p(\bx_{0}, \bx_{t}\mid \by)}{p(\bx_{t}\mid \by)} \\
			& = \sum_{\bx_{0}^{M_{s}}}\sum_{\bx_{0}^{U_{s}}}\frac{p(\bx_{t}\mid \bx_{s})p(\bx_{s}\mid \bx_{0})}{p(\bx_{t}\mid \bx_{0})}\frac{p(\bx_{t}\mid \bx_{0})p(\bx_{0}\mid \by)}{p(\bx_{t}\mid \by)} \\
			& = \frac{p(\bx_{t}\mid \bx_{s})p(\bx_{s}\mid \bx_{0})}{p(\bx_{t}\mid \by)}\sum_{\bx_{0}^{M_{s}}}p_{\rm{ref}}(\bx_{0}^{U_{s}}, \bx_{0}^{M_{s}}\mid \by)\frac{\exp(r(\by, \bx_{0}))}{C} \\
			& = \frac{p(\bx_{t}\mid \bx_{s})p(\bx_{s}\mid \bx_{0})}{p(\bx_{t}\mid \by)}p_{\rm{ref}}(\bx_{0}^{U_{s}}\mid \by)\sum_{\bx_{0}^{M_{s}}}p_{\rm{ref}}(\bx_{0}\mid \by, \bx_{0}^{U_{s}})\frac{\exp(r(\by, \bx_{0}))}{C} \\
			& = \frac{p(\bx_{t}\mid \bx_{s})p(\bx_{s}\mid \bx_{0})}{p(\bx_{t}\mid \by)}p_{\rm{ref}}(\bx_{0}^{U_{s}}\mid \by)\frac{\mE_{p_{\rm{ref}}(\bx_{0}\mid \by, \bx_{0}^{U_{s}})}\left[\exp\left(\frac{r(\by, \bx_{0})}{\lambda}\right)\right]}{C} \\
			& = \frac{p(\bx_{t}\mid \bx_{s})p(\bx_{s}\mid \bx_{0})}{p(\bx_{t}\mid \by)}p_{\rm{ref}}(\bx_s\mid \by)\frac{\mE_{p_{\rm{ref}}(\bx_{0}\mid \by, \bx_s)}\left[\exp\left(\frac{r(\by, \bx_{0})}{\lambda}\right)\right]}{C}.
		\end{aligned}
	\end{equation}
	On the other hand, 
	\begin{equation}
		\small
		\begin{aligned}
			p(\bx_{t}\mid \by) & = \sum_{\bx_{0}^{U_{t}}}\sum_{\bx_{0}^{M_{t}}}p(\bx_{t}, \bx_{0}\mid \by) \\
			& = p(\bx_{t}\mid \bx_{0})\sum_{\bx_{0}^{M_{t}}}\frac{p_{\rm{ref}}(\bx_{0}^{U_{t}}, \bx_{0}^{M_{t}}\mid \by)\exp\left(\frac{r(\by, \bx_{0})}{\lambda}\right)}{C}\\
			& = p(\bx_{t}\mid \bx_{0})p_{\rm{ref}}(\bx_{0}^{U_{t}}\mid \by)\sum_{\bx_{0}^{M_{t}}}\frac{p_{\rm{ref}}(\bx_{0}^{M_{t}}\mid \by, \bx_{0}^{U_{t}})\exp\left(\frac{r(\by, \bx_{0})}{\lambda}\right)}{C} \\
			& = p(\bx_{t}\mid \bx_{0})p_{\rm{ref}}(\bx_{0}^{U_{t}}\mid \by)\frac{\mE_{p_{\rm{ref}}(\bx_{0}\mid \by, \bx_{0}^{U_{t}})}\left[\exp\left(\frac{r(\by, \bx_{0})}{\lambda}\right)\right]}{C} \\
			& = p(\bx_{t}\mid \bx_{0})p_{\rm{ref}}(\bx_t\mid \by)\frac{\mE_{p_{\rm{ref}}(\bx_{0}\mid \by, \bx_t)}\left[\exp\left(\frac{r(\by, \bx_{0})}{\lambda}\right)\right]}{C}.
		\end{aligned} 
	\end{equation}
	By plugging this into the above equation, we prove our conclusion. 
\end{proof}

\subsection{Proof of Total Variance Distance}
\label{app:prop3}
\paragraph{Proposition 3.} \emph{Suppose that for any given query $\by$ and response $\bx_{t}$, we have $|\mathcal {E}(\by, \bx_t) - \widehat {\cE}(\by, \bx_t)| \leq \epsilon$ for some $\epsilon > 0$. 
Then for the $\bx_{0}\sim q(\bx_{0}\mid \by)$ as in Algorithm \ref{alg:alg1}, we have 
\begin{equation}
    \small
    \TV(q(\bx_{0}\mid \by)\parallel p(\bx_{0}\mid \by)) \leq I\left(\frac{2\epsilon + h(\epsilon, M, \lambda, D)}{C - \epsilon - h(\epsilon, M, \lambda, D)}\right) + I\epsilon = \widetilde{\cO}\left(\frac I{\sqrt{M}}+I\epsilon\right),
\end{equation}
where 
\begin{equation}
    \small
    h(\epsilon, M, \lambda, D) = \left(\frac{e^{\frac{D}{\lambda}} - e^{-\frac{D}{\lambda}}}{2}\right)\sqrt{\frac{2}{M}\log{\left(\frac{2}{\epsilon}\right)}}.
\end{equation}
}
\begin{proof}
    For any given $i \in \{1, \ldots, I\}$, we construct a distribution $\tq$ such that $\tq(\bx_{t_i}\mid \by) = q(\bx_{t_i}\mid \by)$ while $\tq(\bx_{t_{i-1}}\mid \by, \bx_{t_i}) = p(\bx_{t_{i-1}}\mid \by, \bx_{t_i})$. Then by the triangle inequality we have
    \begin{equation}
        \small
        \begin{aligned}
            \TV(p(\bx_{t_{i-1}}\mid \by)\parallel q(\bx_{t_{i-1}}\mid \by)) \leq \TV(p(\bx_{t_{i-1}}\mid \by)\parallel \tq(\bx_{t_{i-1}}\mid \by)) + \TV(\tq(\bx_{t_{i-1}}\mid \by)\parallel q(\bx_{t_{i-1}}\mid \by)).
        \end{aligned}
    \end{equation}
	by the triangle's inequality. Due to the definition of total variation distance, we have 
	\begin{equation}\label{eq:tv first bound}
    	\small
    	\begin{aligned}
    		\TV(p(\bx_{t_{i-1}}\mid \by), \tq(\bx_{t_{i-1}}\mid \by)) 
    		&= \frac{1}{2}\sum_{\bx_{t_{i-1}}}|p(\bx_{t_{i-1}}\mid \by) - \tq(\bx_{t_{i-1}}\mid \by)| \\
    		&= \frac{1}{2}\sum_{\bx_{t_{i-1}}}\left|\sum_{\bx_{t_i}}p(\bx_{t_{i-1}}\mid \by, \bx_{t_i})p(\bx_{t_i}\mid \by) - \sum_{\bx_{t_i}}\tq(\bx_{t_{i-1}}\mid \by, \bx_{t_i})\tq(\bx_{t_i}\mid \by)\right| \\
    		&= \frac{1}{2}\sum_{\bx_{t_{i-1}}}\left|\sum_{\bx_{t_i}}p(\bx_{t_{i-1}}\mid \by, \bx_{t_i})\left[p(\bx_{t_i}\mid \by) - \tq(\bx_{t_i}\mid \by)\right]\right| \\
    		&\leq \frac{1}{2}\sum_{\bx_{t_i}}\sum_{\bx_{t_{i-1}}}p(\bx_{t_{i-1}}\mid \by, \bx_{t_i})\left|p(\bx_{t_i}\mid \by) - \tq(\bx_{t_i}\mid \by)\right| \\
    		&= \TV(p(\bx_{t_i}\mid \by)\parallel \tq(\bx_{t_i}\mid \by)) \\
    		&= \TV(p(\bx_{t_i}\mid \by)\parallel q(\bx_{t_i}\mid \by)).
    	\end{aligned}
    \end{equation}
	On the other hand, we note that 
    \begin{equation}\label{eq:tv second bound}
    	\small
    	\begin{aligned}
    		&\TV(\tq(\bx_{t_{i-1}}\mid \by)\parallel q(\bx_{t_{i-1}}\mid \by)) \\
    		&= \sup_{f:\|f\|_{\infty}\leq 1} \left|\sum_{\bx_{t_{i-1}}}f(\bx_{t_{i-1}})\tq(\bx_{t_{i-1}}\mid \by) - \sum_{\bx_{t_{i-1}}}f(\bx_{t_{i-1}})q(\bx_{t_{i-1}}\mid \by)\right| \\
    		&= \sup_{f:\|f\|_{\infty}\leq 1}\left|\sum_{\bx_{t_{i-1}}}f(\bx_{t_{i-1}})\sum_{\bx_{t_i}}\tq(\bx_{t_{i-1}}\mid \by, \bx_{t_i})q(\bx_{t_i}\mid \by) - \sum_{\bx_{t_{i-1}}}f(\bx_{t_{i-1}})\sum_{\bx_{t_i}}q(\bx_{t_{i-1}}\mid \by, \bx_{t_i})q(\bx_{t_i}\mid \by)\right| \\
    		&= \sup_{f:\|f\|_{\infty}\leq 1}\left|\sum_{\bx_{t_i}}q(\bx_{t_i}\mid \by)\sum_{\bx_{t_{i-1}}}f(\bx_{t_{i-1}})p(\bx_{t_{i-1}}\mid \by, \bx_{t_i}) - \sum_{\bx_{t_i}}q(\bx_{t_i}\mid \by)\sum_{\bx_{t_{i-1}}}f(\bx_{t_{i-1}})q(\bx_{t_{i-1}}\mid \by, \bx_{t_i})\right| \\
    		&= \sup_{f:\|f\|_{\infty}\leq 1}\left|\sum_{\bx_{t_i}}q(\bx_{t_i}\mid \by)\left(\mE_{p(\bx_{t_{i-1}}\mid \by, \bx_{t_i})}[f(\bx_{t_{i-1}})] - \mE_{q(\bx_{t_{i-1}}\mid \by, \bx_{t_i})}[f(\bx_{t_{i-1}})]\right)\right| \\
    		&\leq \sup_{f:\|f\|_{\infty}\leq 1}\sum_{\bx_{t_i}}q(\bx_{t_i}\mid \by)\left|\mE_{p(\bx_{t_{i-1}}\mid \by, \bx_{t_i})}[f(\bx_{t_{i-1}})] - \mE_{q(\bx_{t_{i-1}}\mid \by, \bx_{t_i})}[f(\bx_{t_{i-1}})]\right| \\
    		&\leq \frac{2\epsilon + h(\epsilon, M, \lambda, D)}{C - \epsilon - h(\epsilon, M, \lambda, D)} + \epsilon,
    	\end{aligned}
    \end{equation}
    where the last inequality follows from Lemma \ref{lem:expectation gap}, and $f$ can be assumed positive without loss of generality. By combining \eqref{eq:tv first bound} and \eqref{eq:tv second bound}, we get 
	\begin{equation}
    	\small
    	\TV(p(\bx_{t_{i-1}}\mid \by)\parallel q(\bx_{t_{i-1}}\mid \by)) \leq \TV(p(\bx_{t_i}\mid \by)\parallel q(\bx_{t_i}\mid \by)) + \frac{2\epsilon + h(\epsilon, M, \lambda, D)}{C - \epsilon - h(\epsilon, M, \lambda, D)} + \epsilon.
    \end{equation}
    Applying this inequality recursively from $i = I$ down to $i = 1$, and noting that $t_I = T$ and $t_0 = 0$, we get
    \begin{equation}
    	\small
    	\begin{aligned}
    		\TV(p(\bx_0\mid \by)\parallel q(\bx_0\mid \by)) &\leq \TV(p(\bx_T\mid \by)\parallel q(\bx_T\mid \by)) + I\left(\frac{2\epsilon + h(\epsilon, M, \lambda, D)}{C - \epsilon - h(\epsilon, M, \lambda, D)}\right) + I\epsilon.
    	\end{aligned}
    \end{equation}
    
    Since the initial state $\bx_T$ is fixed (fully masked), we have $\TV(p(\bx_T\mid \by), q(\bx_T\mid \by)) = 0$. Therefore, we get
    \begin{equation}
    	\small
    	\TV(q(\bx_0\mid \by)\parallel p(\bx_0\mid \by)) \leq I\left(\frac{2\epsilon + h(\epsilon, M, \lambda, D)}{C - \epsilon - h(\epsilon, M, \lambda, D)}\right) + I\epsilon
    \end{equation}
    
    Finally, note that $h(\epsilon, M, \lambda, D) = \widetilde{\cO}(1/\sqrt{M})$, so the overall bound is $\widetilde{\cO}\left(I/\sqrt{M}+I\epsilon\right)$.
\end{proof}

\begin{lemma}
    \label{lem:expectation gap}
    For any given query $\by$ and response $\bx_{t_i}$, if
    \begin{equation}\label{eq:expectation gap}
        \small
        |\mathcal {E}(\by, \bx_{t_i}) - \widehat {\cE}(\by, \bx_{t_i})| \leq \delta
    \end{equation}
    for $\delta = \epsilon$, then
    \begin{equation}
        \small
        \left|\mE_{p(\bx_{t_{i-1}}\mid \by, \bx_{t_i})}[f(\bx_{t_{i-1}})] - \mE_{q(\bx_{t_{i-1}}\mid \by, \bx_{t_i})}[f(\bx_{t_{i-1}})]\right| \leq \frac{2\epsilon + h(\epsilon, M, \lambda, D)}{C - \epsilon - h(\epsilon, M, \lambda, D)} + \epsilon
    \end{equation}
    holds for any positive $f$ with $||f||_{\infty}\leq 1$, where
    \begin{equation}
        \small
        h(\epsilon, M, \lambda, D) = \left(\frac{e^{\frac{D}{\lambda}} - e^{-\frac{D}{\lambda}}}{2}\right)\sqrt{\frac{2}{M}\log{\left(\frac{2}{\epsilon}\right)}}.
    \end{equation}
\end{lemma}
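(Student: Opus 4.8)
The plan is to recognize both $\mE_p[f]$ and $\mE_q[f]$ as self-normalized importance-sampling ratios against the reference transition, and then bound the perturbation of this ratio. By Proposition~\ref{pro:transition probability with condition}, the target transition obeys $p(\bx_{t_{i-1}}\mid \by,\bx_{t_i}) \propto p_{\rm ref}(\bx_{t_{i-1}}\mid \by,\bx_{t_i})\,\mathcal{E}(\by,\bx_{t_{i-1}})$, so writing $I := \mE_{p(\bx_{t_{i-1}}\mid \by,\bx_{t_i})}[f]$ gives $I = \mE_{p_{\rm ref}}[\mathcal{E}f]/C$ with normalizer $C = \mE_{p_{\rm ref}(\bx_{t_{i-1}}\mid \by,\bx_{t_i})}[\mathcal{E}(\by,\bx_{t_{i-1}})] = \mathcal{E}(\by,\bx_{t_i})$ by the tower property. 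Denoting the candidates by $X_m := \bx_{t_{i-1}}(m)$, drawn i.i.d.\ from $p_{\rm ref}(\cdot\mid \by,\bx_{t_i})$, and setting $A = \frac1M\sum_m \widehat{\mathcal{E}}(\by,X_m)f(X_m)$ and $B = \frac1M\sum_m \widehat{\mathcal{E}}(\by,X_m)$, the reweight-and-resample step of Algorithm~\ref{alg:alg1} makes the conditional mean of $f$ under $q$ equal to $A/B$. Hence $\mE_q[f] = \mE[A/B]$ and it suffices to bound $\mE\,|A/B - I|$.

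First I would isolate the two error sources by introducing the true-energy sums $A' = \frac1M\sum_m \mathcal{E}(\by,X_m)f(X_m)$ and $B' = \frac1M\sum_m \mathcal{E}(\by,X_m)$ and using $A - IB = (A - A') - I(B - B') + (A' - IB')$. The uniform hypothesis $|\mathcal{E} - \widehat{\mathcal{E}}| \le \epsilon$, combined with $\|f\|_\infty \le 1$ and $I \le 1$, bounds each of the first two terms by $\epsilon$, producing the $2\epsilon$ in the numerator. The key observation is that the remaining term is a mean-zero Monte Carlo average: since $\mE_{p_{\rm ref}}[\mathcal{E}(f - I)] = \mE_{p_{\rm ref}}[\mathcal{E}f] - I\,\mE_{p_{\rm ref}}[\mathcal{E}] = 0$, we have $A' - IB' = \frac1M\sum_m \mathcal{E}(\by,X_m)\big(f(X_m) - I\big)$ with zero expectation.

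Next I would apply Hoeffding's inequality. Since $r(\by,\cdot)\in[-D,D]$ forces $\mathcal{E}\in[e^{-D/\lambda},e^{D/\lambda}]$, the summands defining $A' - IB'$ and $B'$ are bounded, so calibrating the two-sided Hoeffding failure probability to $\epsilon$ controls both $|A' - IB'|$ and $|B' - C|$ by $h(\epsilon,M,\lambda,D)$ on a good event $G$ with $\Pr(G) \ge 1 - \epsilon$; the form $h \propto (e^{D/\lambda}-e^{-D/\lambda})\sqrt{\log(2/\epsilon)/M}$ reflects the range of the energy. On $G$ this yields $|A' - IB'| \le h$ and, through $B \ge B' - \epsilon \ge C - h - \epsilon$, a positive lower bound on the denominator. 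The elementary identity $|A/B - I| = |A - IB|/B$ then gives $|A/B - I| \le (2\epsilon + h)/(C - \epsilon - h)$ on $G$.

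Finally I would dispatch the complementary event. Because $f \in [0,1]$, both $A/B$ and $I$ are convex averages of $f$ and hence lie in $[0,1]$, so $|A/B - I| \le 1$ surely; as $\Pr(G^c) \le \epsilon$, splitting the expectation over $G$ and $G^c$ adds the trailing $\epsilon$ and yields $\mE\,|A/B - I| \le (2\epsilon + h)/(C - \epsilon - h) + \epsilon$, which is the claim. I expect the Hoeffding step to be the main obstacle: one must confirm that the mean-zero numerator average and the denominator fluctuation both concentrate within the single radius $h$ tied to the range $e^{D/\lambda}-e^{-D/\lambda}$ (a union bound over the two averages only affects the logarithmic constant), and then propagate this through the ratio while uniformly lower-bounding the random denominator $B$; the additive $\epsilon$ is precisely the price of the Hoeffding failure event.
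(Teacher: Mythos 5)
Your overall architecture is sound and genuinely different from the paper's. You write $\mE_q[f]=\mE[A/B]$ with $A=\frac1M\sum_m\widehat\cE(X_m)f(X_m)$, $B=\frac1M\sum_m\widehat\cE(X_m)$, decompose $A-IB$ into two $\epsilon$-terms plus the mean-zero fluctuation $A'-IB'$, and apply Hoeffding to both $A'-IB'$ and $B'$. The paper instead never applies concentration to the numerator at all: on the good event it bounds each self-normalized weight by $\frac{\cE(X_m)+\epsilon}{M(C-\epsilon-h)}$ and then takes the \emph{exact} expectation $\mE\bigl[\frac1M\sum_m\cE(X_m)f(X_m)\bigr]=C\,\mE_p[f]$, so the only concentration needed is Lemma~\ref{lem:chernoff's inequality} for the denominator $\frac1M\sum_m\cE(X_m)$, whose summands live in $[e^{-D/\lambda},e^{D/\lambda}]$ and hence yield exactly the radius $h$. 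Your handling of the bad event (using $A/B,I\in[0,1]$ to pay $\Pr(G^c)\le\epsilon$) matches the paper's, and your identification of the normalizer as $\cE(\by,\bx_{t_i})$ via the tower property is actually cleaner than the paper's appeal to the global constant $C$.

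The concrete gap is precisely at the step you flagged. The summands of $A'-IB'$ are $\cE(X_m)\bigl(f(X_m)-I\bigr)$, which range over $[-e^{D/\lambda},e^{D/\lambda}]$ since $f(X_m)-I\in[-1,1]$; their range is $2e^{D/\lambda}$, not $e^{D/\lambda}-e^{-D/\lambda}$. Hoeffding therefore gives a concentration radius of order $e^{D/\lambda}\sqrt{\tfrac{2}{M}\log(2/\epsilon)}$, which exceeds $h$ by the factor $\tfrac{2e^{D/\lambda}}{e^{D/\lambda}-e^{-D/\lambda}}\ge 2$, and the union bound over the two averages further replaces $\log(2/\epsilon)$ by $\log(4/\epsilon)$. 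So your argument proves the lemma only with a strictly larger $h$; the asymptotic conclusion $\tilde\cO(1/\sqrt M)+\epsilon$ survives, but the stated constant does not. To recover the exact bound you would need to drop the Hoeffding step on the numerator and instead take the expectation of $\frac1M\sum_m\cE(X_m)f(X_m)$ exactly, as the paper does, leaving the single concentration event on the denominator.
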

\begin{proof}
    Given $\by$ and $\bx_{t_i}$, we know that
    \begin{equation}\label{eq:approximated expectation}
        \small
        \begin{aligned}
            \mE_{q(\bx_{t_{i-1}}\mid \by, \bx_{t_i})}[f(\bx_{t_{i-1}})] = \mE_{\bx_{t_{i-1}}}\left[\sum_{m=1}^{M}\frac{\widehat {\cE}(\by, \bx_{t_{i-1}}(m))}{\sum_{m=1}^{M}\widehat {\cE}(\by, \bx_{t_{i-1}}(m))}f(\bx_{t_{i-1}}(m))\right],
        \end{aligned}
    \end{equation}
    where $\bx_{t_{i-1}}(m)\sim p_{\rm{ref}}(\bx_{t_{i-1}}\mid \by, \bx_{t_i})$. Define the event
    \begin{equation}
        \small
        A_{t_i} = \left\{\left|\frac{1}{M}\sum_{m=1}^{M} \cE(\by, \bx_{t_{i-1}}(m)) - C\right| \leq h(\epsilon, M, \lambda, D)\right\}.
    \end{equation}
    By Lemma \ref{lem:chernoff's inequality}, we know that $\bbP\left(A_{t_i}^c\right) \leq \epsilon$. Then, by definition, we know
    \begin{equation}
    	\small
    	\begin{aligned}
    		C - \epsilon - h(\epsilon, M, \lambda, D) \leq \frac{1}{M}\sum_{m=1}^{M}\widehat {\cE}(\by, \bx_{t_{i-1}}(m)) \leq C + \epsilon + h(\epsilon, M, \lambda, D).
    	\end{aligned}
    \end{equation} 
    holds when event $A_t$ happens. In this regime, by plugging this and \eqref{eq:expectation gap} into \eqref{eq:approximated expectation}, we get
    \begin{equation}
    	\small
    	\begin{aligned}
    		&\mE_{q(\bx_{t_{i-1}}\mid \by, \bx_{t_i})}[f(\bx_{t_{i-1}})] \\
    		&= \mE_{q(\bx_{t_{i-1}}\mid \by, \bx_{t_i})}[(\textbf{1}_{A_{t_i}} + \textbf{1}_{A_{t_i}^c})f(\bx_{t_{i-1}})] \\
    		&\leq \mE_{\bx_{t_{i-1}}}\left[\frac{1}{M}\sum_{m=1}^{M}\frac{\mathcal {E}(\by, \bx_{t_{i-1}}(m)) + \epsilon}{C - \epsilon - h(\epsilon, M, \lambda, D)}f(\bx_{t_{i-1}}(m))\right] + \mE_{q(\bx_{t_{i-1}}\mid \by, \bx_{t_i})}[\textbf{1}_{A_{t_i}^c}f(\bx_{t_{i-1}})] \\
    		&\leq \left(\frac{C}{C - \epsilon - h(\epsilon, M, \lambda, D)}\right)\mE_{\bx_{t_{i-1}}}\left[\frac{1}{M}\sum_{m=1}^{M}\frac{\mathcal {E}(\by, \bx_{t_{i-1}}(m))}{C}f(\bx_{t_{i-1}}(m))\right] + \frac{\epsilon}{C - \epsilon - h(\epsilon, M, \lambda, D)} + \bbP(A^c_{t_i}) \\
    		&\leq \left(\frac{C}{C - \epsilon - h(\epsilon, M, \lambda, D)}\right)\mE_{p(\bx_{t_{i-1}}\mid \by, \bx_{t_i})}[f(\bx_{t_{i-1}})] + \frac{\epsilon}{C - \epsilon - h(\epsilon, M, \lambda, D)} + \epsilon \\
    		&\leq \mE_{p(\bx_{t_{i-1}}\mid \by, \bx_{t_i})}[f(\bx_{t_{i-1}})] + \frac{2\epsilon + h(\epsilon, M, \lambda, D)}{C - \epsilon - h(\epsilon, M, \lambda, D)} + \epsilon.
    	\end{aligned}
    \end{equation}
    
    Similarly, we can prove the lower bound:
    \begin{equation}
    	\small
    	\mE_{q(\bx_{t_{i-1}}\mid \by, \bx_{t_i})}[f(\bx_{t_{i-1}})] \geq \mE_{p(\bx_{t_{i-1}}\mid \by, \bx_{t_i})}[f(\bx_{t_{i-1}})] - \frac{2\epsilon + h(\epsilon, M, \lambda, D)}{C + \epsilon + h(\epsilon, M, \lambda, D)}.
    \end{equation}
    
    Combining both bounds yields the desired result.
\end{proof}

\begin{lemma}\label{lem:chernoff's inequality}
	For bounded $r(\by, \bx_{0})\in [-D, D]$, we have 
	\begin{equation}
		\small
		\left|\frac{1}{M}\sum_{m=1}^{M}\mathcal {E}(\by, \bx_{s}(m))- C\right| \leq \left(\frac{e^{\frac{D}{\lambda}} - e^{-\frac{D}{\lambda}}}{2}\right)\sqrt{\frac{2}{M}\log{\left(\frac{2}{\delta}\right)}}
	\end{equation}
	holds with probability at least $1 - \delta$.
\end{lemma}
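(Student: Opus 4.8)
The plan is to recognize this as a textbook two-sided Hoeffding bound applied to the i.i.d. bounded random variables $\mathcal{E}(\by, \bx_s(m))$, $m = 1, \ldots, M$, once two elementary facts are in place: a uniform bound on the range of $\mathcal{E}(\by, \bx_s)$, and the identification of its mean with the normalizing constant $C$.

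First I would establish boundedness. Because $r(\by, \bx_0) \in [-D, D]$, the exponentiated reward obeys $\exp(r(\by, \bx_0)/\lambda) \in [e^{-D/\lambda}, e^{D/\lambda}]$, and since $\mathcal{E}(\by, \bx_s) = \mE_{p_{\rm ref}(\bx_0 \mid \by, \bx_s)}[\exp(r(\by, \bx_0)/\lambda)]$ is a conditional average of such values, it lies in the same interval. Hence each summand $\mathcal{E}(\by, \bx_s(m))$ takes values in an interval of width $e^{D/\lambda} - e^{-D/\lambda}$, which is exactly the quantity that will enter Hoeffding's denominator.

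Next I would pin down the mean. Drawing $\bx_s(m) \sim p_{\rm ref}(\bx_s \mid \by)$ and applying the tower property of conditional expectation gives
\[
\mE[\mathcal{E}(\by, \bx_s)] = \mE_{p_{\rm ref}(\bx_s \mid \by)}\Big[\mE_{p_{\rm ref}(\bx_0 \mid \by, \bx_s)}[\exp(r(\by, \bx_0)/\lambda)]\Big] = \mE_{p_{\rm ref}(\bx_0 \mid \by)}[\exp(r(\by, \bx_0)/\lambda)] = C,
\]
where the final equality is precisely the definition of $C$ from Proposition~\ref{pro:closed-form}. This is the only step carrying genuine content; everything else is mechanical.

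Finally, with a bounded i.i.d. sample of common mean $C$ and range $e^{D/\lambda} - e^{-D/\lambda}$, Hoeffding's inequality yields, for every $\tau > 0$,
\[
\bbP\left(\left|\frac{1}{M}\sum_{m=1}^{M}\mathcal{E}(\by, \bx_s(m)) - C\right| \geq \tau\right) \leq 2\exp\left(-\frac{2M\tau^2}{\left(e^{D/\lambda} - e^{-D/\lambda}\right)^2}\right).
\]
Setting the right-hand side equal to $\delta$ and solving for $\tau$ gives $\tau = \big(\tfrac{1}{2}(e^{D/\lambda} - e^{-D/\lambda})\big)\sqrt{(2/M)\log(2/\delta)}$, which is exactly the claimed threshold, so the event holds with probability at least $1 - \delta$. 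I do not anticipate a real obstacle here; the only points requiring care are the tower-property identification of the mean with $C$ and the clean algebraic rearrangement of the Hoeffding exponent into the stated closed form.
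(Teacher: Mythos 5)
Your proposal is correct and follows essentially the same route as the paper: the paper likewise observes that boundedness of $r$ confines each $\mathcal{E}(\by,\bx_s(m))$ to $[e^{-D/\lambda},e^{D/\lambda}]$, making them i.i.d.\ sub-Gaussian with variance proxy $(e^{D/\lambda}-e^{-D/\lambda})^2/4$, asserts $\mE_{\bx_s}[\mathcal{E}(\by,\bx_s)]=C$, and invokes Chernoff's inequality, which for bounded variables is exactly the two-sided Hoeffding bound you rearrange. The only difference is that you spell out the tower-property identification of the mean with $C$, which the paper states without derivation.
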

\begin{proof}
	Notably, since $r(\by, \bx_{0})\in [-D, D]$, we know $\{\mathcal {E}(\by, \bx_{s}(m))\}_{m=1}^{M}$ are $M$ i.i.d. sub-Gaussian samples with coefficient $(e^{\frac{D}{\lambda}} - e^{-\frac{D}{\lambda}})^{2} / 4$. Since $\mE_{\bx_{s}}[\mathcal {E}(\by, \bx_{s})] = C$, by Chernoff's inequality \cite{duchi2016lecture}, we prove our conclusion. 
\end{proof}

\subsection{Proof of Importance Sampling}
\label{app:prop5}

\secondapproximationerror*
\begin{proof}
	Since $l(\bx_0) \exp(r(\by,\bx_0)/\lambda)\leq Le^{D/\lambda}$  for all $\bx_0$, and $\{\bx_0(k)\}_{k=1}^K$ are $K$ i.i.d. samples from $p_{\rm small}$, we can apply Hoeffding's inequality \citep{duchi2016lecture} for bounded random variables:
	\begin{equation}
		\small
		\bbP\left(\left|\frac{1}{K}\sum_{k=1}^{K} l(\bx_0(k)) - \mE_{p_{\rm small}}[l(\bx_0)]\right| \geq \epsilon\right) \leq 2\exp\left(-\frac{2K\epsilon^2}{L^2e^{\frac {2D}\lambda}}\right).
	\end{equation}
	Setting the right-hand side equal to $\delta$ and solving for $\epsilon$ gives:
	\begin{equation}
		\small
		\epsilon = Le^{\frac D\lambda}\sqrt{\frac{\log(2/\delta)}{2K}}.
	\end{equation}
	Recalling that $\mE_{p_{\rm small}}[l(\bx_0)] = \mE_{p_{\rm ref}}[\exp(r(\by,\bx_0)/\lambda)] = \mathcal {E}(\by, \bx_s)$, we prove our conclusion.
\end{proof}

\paragraph{Theorem 1.} \emph{
    For any given query $\by$, noisy response $\bx_{s}$, bounded $r(\by,\bx_0)\in[-D,D]$ and $\bx_0 \sim q(\bx_0 \mid \by)$ as in Algorithm \ref{alg:alg2}, if there exists a constant $L > 0$ such that $l(\bx_0) \leq L$ for all $\bx_0$, then we have 
    \begin{equation}
        \small
        \TV(q(\bx_{0}\mid \by)\parallel p(\bx_{0}\mid \by)) \leq I\left(\frac{2\epsilon + g(\epsilon, M,K,L, \lambda, D)}{C - \epsilon - g(\epsilon, M, K,L,\lambda, D)}\right) + 4I\exp\left(-\frac{2K\epsilon^2}{L^2e^{\frac {2D}\lambda}}\right) = \widetilde{\cO}\left(\frac I{\sqrt M}+\frac {I}{\sqrt K}\right),
    \end{equation}
    where 
    \begin{equation}
        \small
        \epsilon = \widetilde{\cO} \left(\frac{1}{\sqrt{K}}\right), \quad l(\bx_{0})= \frac{p_{\rm ref}(\bx_{0}\mid\by,\bx_{s})}{p_{\rm small}(\bx_{0}\mid\by,\bx_{s})}, 
        \quad g(\epsilon, M,K,L,\lambda,D) = \left(\frac{e^{\frac{D}{\lambda}} - e^{-\frac{D}{\lambda}}}{2}\right)\sqrt{\frac{4K\epsilon^2}{ML^2e^{\frac {2D}\lambda}}} =\widetilde{\cO}\left(\frac 1{\sqrt M}\right).
    \end{equation}
}
\begin{proof}
	For any given query $\by$ and noisy response $\bx_{s} \in \{\bx_{t_1},\ldots \bx_{t_I}\}$, let us define events
	\begin{equation}
		\small
        A_{s} = \left\{ \left|\cE(\by, \bx_s(m)) - \widehat{\cE}_{\rm small}(\by, \bx_s(m))\right| \leq \epsilon \right\}, \quad
		B_{s} = \left\{ \left| \frac{1}{M} \sum_{m=1}^M \cE(\by, \bx_s(m)) - C \right| \leq h(\epsilon, M,K,L,\lambda, D) \right\},
	\end{equation}
	where 
    \begin{equation}
        \small
        g(\epsilon, M,K,L,\lambda,D) = \left(\frac{e^{\frac{D}{\lambda}} - e^{-\frac{D}{\lambda}}}{2}\right)\sqrt{\frac{4K\epsilon^2}{ML^2e^{\frac {2D}\lambda}}},
    \end{equation}
	By Proposition \ref{pro:second approximation error} and Lemma \ref{lem:chernoff's inequality}, we know that 
    \begin{equation}
        \small
        \bbP(A_{s}^c) \leq 2\exp\left(-\frac {2K\epsilon^2}{L^2 e^{\frac {2D}\lambda}}\right),\quad \bbP(B_{s}^c) \leq 2\exp\left(-\frac {2K\epsilon^2}{L^2 e^{\frac {2D}\lambda}}\right).
    \end{equation}
	Let $E_{s} = A_{s} \cap B_{s}$. We have
	\begin{equation}
		\small
		\bbP(E_{s}^c) \leq \bbP(A_{s}^c) + \bbP(B_{s}^c) \leq 4\exp\left(-\frac {2K\epsilon^2}{L^2 e^{\frac {2D}\lambda}}\right)
	\end{equation}
	When $E_{s}$ occurs, we have
	\begin{equation}
		\small
        C - \epsilon- g(\epsilon, M,K,L,\lambda,D) \leq \frac{1}{M}\sum_{m=1}^{M}\widehat{\cE}_{\rm small}(\by, \bx_s(m)) \leq C + \epsilon + g(\epsilon, M,K,L,\lambda,D),
	\end{equation}
	Following the same argument as in the proof of Proposition \ref{pro:tv distance} and Lemma \ref{lem:expectation gap}, we have
	\begin{equation}
		\small
		\TV(q(\bx_{0}\mid \by)\parallel p(\bx_{0}\mid \by)) \leq I\left(\frac{2\epsilon + g(\epsilon, M,K,L, \lambda, D)}{C - \epsilon - g(\epsilon, M,K,L, \lambda, D)}\right) + 4I\exp\left(-\frac {2K\epsilon^2}{L^2 e^{\frac {2D}\lambda}}\right).
	\end{equation}

    Finally, note that $g(\epsilon, M,K,L, \lambda, D) = \widetilde{\cO}(\epsilon\sqrt{K/M})$, so we get $\widetilde{\cO}(I\epsilon(1+\sqrt{K/M})+I\exp(-K\epsilon^2))$.
    A common choice is to set $\epsilon = \widetilde{\cO}(1 / \sqrt{K})$, so the overall bound is $\widetilde{\cO}(I/\sqrt M+I/\sqrt K)$.
\end{proof}

\section{Implementation Details of ETS}
\subsection{Computational Complexity of ETS}
\label{app:complexity}
To quantify our scaling, we analyze the total number of tokens processed by Algorithm \ref{alg:alg1}. Let the guidance block size be $B = d_x / I$. The number of tokens generated over $I$ steps is
\begin{equation}
    \small
    \begin{aligned}
        N_{\rm {tokens}} &= \sum_{i=1}^{I} M(B+K (d_x-iB)) \\
        &= Md_x + IMKd_x - \frac 12 (I + 1)MKd_x \\
        &= M\left(1+\frac{I-1}{2}K\right)d_x.
    \end{aligned}
\end{equation}
Thus, the latency of ETS is approximately $N_{\text{tokens}} / d_x$ times that of a standard single-pass inference, which serves as a worst-case upper bound for both ARMs and DLMs. 
In practice, ARMs equipped with \emph{batching} and \emph{early-stop} mechanisms decode $\cO(MK)$ sequences in parallel and often terminate early, yielding lower actual per-token cost and fewer generated tokens, as shown in Section~\ref{subsec:results}.

\subsection{Implementation Details of ETS Pipeline}
\label{app:implementation}

Figure~\ref{fig:ETS} illustrates our asynchronous evaluation pipeline designed for ARMs with dynamic generation times. For ARMs equipped with an \emph{early-stop} mechanism, the generation time varies across different test examples, which would create idle periods (bubbles) in traditional synchronous evaluation pipelines (lm-eval). 
Inspired by the asynchronous rollout in AReaL~\citep{fu2025areal}, we designed this pipeline to fully utilize all of our acceleration techniques. 
The same pipeline can be applied to DLMs, although the potential for speedup is limited due to their relatively uniform generation times.

\begin{figure*}[h]
    \centering
    \includegraphics[width=0.6\linewidth]{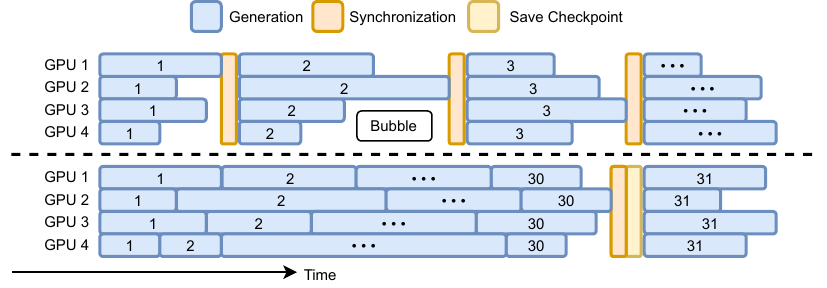}
    \caption{Asynchronous ETS evaluation pipeline for ARMs. Up: Traditional synchronous evaluation pipeline. Down: Asynchronous ETS evaluation pipeline. Numbers in the generation process denote the data index being evaluated on each GPU rank.}
    \label{fig:ETS}
\end{figure*}

\subsection{Implementation Details of Reward Calculating}
\label{app:implets}
Our reward is implemented via self-consistency: for a set of sampled completions, we first extract each completion's final answer, identify the consensus answer by majority vote, and assign reward $1$ to completions matching the consensus and $0$ otherwise.
To make this procedure more stable and efficient in practice, we adopt an answer caching method.

Concretely, during the entire ETS generation process, whenever we obtain a fully generated sample $\bx_0$, we extract its final answer $a(
\bx_0)$ and append it to a global cache $\mathcal{A}_{\mathrm{cache}}$.

Later, when estimating energies with self-consistency reward, we draw new completions $\{\bx_0(i)\}_{i=1}^{MK}$, extract their answers $\{a(\bx_0{(i)})\}_{i=1}^{MK}$, and compute the consensus answer over the union of cached and newly obtained answers. Then, we update the global cache $\mathcal{A}_{\mathrm{cache}}$ with these newly obtained answers.

The answer caching method is reasonable because it simply increases the effective sample size used by self-consistency reward, without changing the underlying estimator. Theoretically, as long as the unique optimal answer has the highest probability mass among all possible answers, the probability that self-consistency selects an incorrect answer decays exponentially with the sample size.

Let $A$ denote the extracted final answer from a single completion, taking values in a large discrete set $\mathcal{A}$. Let $a^\star\in\mathcal{A}$ be the unique correct answer with probability
\begin{equation}
    \small
    \bbP(A=a^\star)=p,
    \qquad
    \bbP(A=a)=q_a \ \ (a\neq a^\star),
    \qquad
    \sum_{a\neq a^\star} q_a = 1-p.
\end{equation}
Assume that $p > q_{\max}$ where $ q_{\max} = \max_{a\neq a^\star} q_a.$ Given $N$ i.i.d.\ samples $\{A_i\}_{i=1}^N$, self-consistency predicts the consensus answer 
\begin{equation}
    \small
    \hat a \;=\; \arg\max_{a\in\mathcal{A}} C_a,
    \qquad
    C_a = \sum_{i=1}^N \mathbb{I}[A_i=a].
\end{equation}

\begin{theorem}[Exponential decay of self-consistency voting error]\label{thm:sc_exp_decay}
Under the assumption $p>q_{\max}$ and i.i.d.\ sampling, the probability that self-consistency selects an incorrect answer satisfies
\begin{equation}
    \small
    \bbP(\hat a \neq a^\star)
\;\le\;
\sum_{a\neq a^\star}\exp\!\left(-\frac{N\,(p-q_a)^2}{2}\right)
\;\le\;
(|\mathcal{A}|-1)\exp\!\left(-\frac{N\,(p-q_{\max})^2}{2}\right),
\end{equation}
where $|\mathcal{A}|$ can be replaced by the number of answers with nonzero probability mass if $\mathcal{A}$ is large.
\end{theorem}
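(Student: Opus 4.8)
The plan is to reduce the multiway voting error to a family of pairwise count comparisons, and then control each comparison with a single concentration inequality. First I would note that whenever the mode fails, $\hat a \neq a^\star$, at least one incorrect answer must have matched or beaten the correct one in the vote, so that the error event is contained in a union of pairwise events, $\{\hat a \neq a^\star\} \subseteq \bigcup_{a \neq a^\star}\{C_a \geq C_{a^\star}\}$. Using the non-strict inequality $C_a \geq C_{a^\star}$ here is deliberate: it absorbs any tie-breaking rule adversarially, so the argument is agnostic to how $\arg\max$ resolves ties. A union bound then already produces the sum-over-wrong-answers structure of the claim, $\bbP(\hat a \neq a^\star) \leq \sum_{a \neq a^\star}\bbP(C_a \geq C_{a^\star})$.

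Next I would fix a wrong answer $a$ and introduce the per-sample score $Z_i = \mathbb{I}[A_i = a] - \mathbb{I}[A_i = a^\star]$, so that $C_a - C_{a^\star} = \sum_{i=1}^N Z_i$. The $Z_i$ are i.i.d., take values in $[-1,1]$, and have mean $\mE[Z_i] = q_a - p$, which is strictly negative by the hypothesis $p > q_{\max} \geq q_a$. The comparison event then becomes a one-sided deviation of this bounded-increment walk above its mean, namely $\{C_a \geq C_{a^\star}\} = \{\sum_{i=1}^N (Z_i - \mE Z_i) \geq N(p - q_a)\}$. Applying Hoeffding's inequality with range-$2$ summands (so $\sum_i (b_i - a_i)^2 = 4N$) and threshold $t = N(p-q_a) > 0$ gives $\bbP(C_a \geq C_{a^\star}) \leq \exp(-t^2/(2N)) = \exp(-N(p-q_a)^2/2)$, which is precisely the per-term factor in the statement.

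Summing over $a \neq a^\star$ then yields the first inequality directly. For the second, coarser bound I would invoke monotonicity of the exponent: since $0 < p - q_{\max} \leq p - q_a$ for every wrong answer, each term is at most $\exp(-N(p-q_{\max})^2/2)$, and there are $|\mathcal{A}| - 1$ such terms. Finally, the remark that $|\mathcal{A}|$ may be replaced by the number of answers with nonzero mass holds because any $a$ with $q_a = 0$ is never drawn, so $C_a = 0$ identically and such answers can be dropped from the union without affecting the dominant behavior.

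I expect the only genuine care points to be bookkeeping rather than anything deep. The first is getting the Hoeffding constant right: it is the range-$2$ variable that turns $N^2(p-q_a)^2$ into $N(p-q_a)^2/2$ after dividing by $4N$, so a careless use of range $1$ would give the wrong constant. The second is handling ties conservatively in the decomposition, which the non-strict inequality resolves cleanly. Beyond these, the concentration step is routine once the problem is recast as a bounded-increment random walk with negative drift, so I do not anticipate a substantive obstacle.
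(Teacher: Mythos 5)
Your proposal is correct and follows essentially the same route as the paper's proof: the same inclusion $\{\hat a \neq a^\star\} \subseteq \bigcup_{a\neq a^\star}\{C_a \ge C_{a^\star}\}$, the same union bound, and the same Hoeffding argument applied to the difference of indicators (you use $Z_i = \mathbb{I}[A_i=a]-\mathbb{I}[A_i=a^\star]$ where the paper uses its negation, which is immaterial), arriving at the identical per-term constant $\exp(-N(p-q_a)^2/2)$.
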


\begin{proof}
Self-consistency makes an error if and only if there exists some $a\neq a^\star$ whose vote count is no smaller than that of $a^\star$, i.e.,
\begin{equation}
    \small
\{\hat a \neq a^\star\} \subseteq \bigcup_{a\neq a^\star}\{C_a \ge C_{a^\star}\}.
\end{equation}

By the union bound,
\begin{equation}
    \small
\bbP(\hat a \neq a^\star) \le \sum_{a\neq a^\star}\bbP(C_a \ge C_{a^\star}).
\end{equation}
Fix any $a\neq a^\star$ and define for each sample
\begin{equation}
    \small
Y_i^{(a)} = \mathbb{I}[A_i=a^\star] - \mathbb{I}[A_i=a],
\qquad
Y_i^{(a)} \in \{-1,0,1\}\subset[-1,1].
\end{equation}
Then
\begin{equation}
    \small
\sum_{i=1}^N Y_i^{(a)} = C_{a^\star} - C_a,
\qquad
\mathbb{E}[Y_i^{(a)}]=p-q_a>0.
\end{equation}
Therefore,
\begin{equation}
    \small
\bbP(C_a \ge C_{a^\star})
=
\bbP\!\left(\sum_{i=1}^N Y_i^{(a)} \le 0\right)
=
\bbP\!\left(\frac{1}{N}\sum_{i=1}^N Y_i^{(a)} - (p-q_a) \le -(p-q_a)\right).
\end{equation}
Applying Hoeffding's inequality for independent bounded random variables in $[-1,1]$ yields
\begin{equation}
    \small
\bbP(C_a \ge C_{a^\star})
\le
\exp\!\left(-\frac{N\,(p-q_a)^2}{2}\right).
\end{equation}
Summing over all $a\neq a^\star$ proves the first inequality. The second inequality follows from $p-q_a \ge p-q_{\max}$ for all $a\neq a^\star$.
\end{proof}

\subsection{The Candidates of Reward Designs}
\label{app:reward candidates}
\paragraph{Logits.}
The log-probability reward measures how likely a sampled response trajectory is under the reference policy.
Intuitively, responses with higher average token log-probability lie in higher-density regions of
$p_{\mathrm{ref}}(x_0\mid y)$, and thus reflect stronger model confidence in its own generation.
We define the reward as the average per-token log-probability along the trajectory:
\begin{equation}\small
\label{eq:reward_avglogp}
r_{\mathrm{logp}}(\by,\bx_0)
= \frac{1}{d_x}\sum_{i=1}^{d_x}\log p_{\mathrm{ref}}\!\left(x_0^i \mid \by, \bx_0^{<i}\right).
\end{equation}

\paragraph{Entropy.}
The entropy reward captures predictive uncertainty at each generation step by inspecting the next-token distribution
$p_{\mathrm{ref}}(\cdot \mid \by, \bx_0^{<i})$.
A more concentrated distribution indicates higher certainty, while a flatter distribution indicates higher uncertainty.
To align the direction of optimization with confidence, we use the \emph{negative entropy} and average it over time:
\begin{equation}\small
\label{eq:reward_entropy}
r_{\mathrm{ent}}(\by,\bx_0)
= \frac{1}{d_x}\sum_{i=1}^{d_x}
\sum_{v\in V} p_{\mathrm{ref}}\!\left(v\mid \by,\bx_0^{<i}\right)\log p_{\mathrm{ref}}\!\left(v\mid \by,\bx_0^{<i}\right).
\end{equation}

\paragraph{Self-Certainty.}
Self-certainty quantifies how far the model's next-token distribution deviates from the uniform distribution.
When the predictive distribution is strongly peaked, it departs more from uniformity, indicating higher confidence.
Following the standard definition, we compute an average token-level score based on the log-likelihood ratio against a uniform baseline:
\begin{equation}\small
\label{eq:reward_self_certainty}
r_{\mathrm{sc}}(\by,\bx_0)
= -\frac{1}{d_x\,|V|}\sum_{i=1}^{d_x}\sum_{v\in V}
\log \Bigl(|V|\cdot p_{\mathrm{ref}}(v\mid \by,\bx_0^{<i})\Bigr).
\end{equation}
Up to an additive constant, this is equivalent to the divergence $D_{\mathrm{KL}}(U\parallel p_{\mathrm{ref}}(\cdot\mid \by,\bx_0^{<i}))$,
where $U$ denotes the uniform distribution over $V$.

\paragraph{Self-Consistency.}
Self-consistency uses agreement among multiple independent samples as a proxy for reliability.
Given the same conditioning input, we sample $K$ i.i.d.\ responses
$\{\bx_0{(k)}\}_{k=1}^K$ from $p_{\mathrm{ref}}(\bx_0\mid \by)$ (or from $p_{\mathrm{ref}}(\bx_0\mid \by, \bx_s)$ when conditioning on an intermediate state $\bx_s$),
and extract a discrete final answer $a{(k)}$ from each completion (e.g., via a task-specific parsing function).
We then define the consensus answer by majority voting and reward samples that match the consensus:
\begin{equation}
    \small
\begin{aligned}
\bx_0{(k)} &\overset{\text{i.i.d.}}{\sim} p_{\mathrm{ref}}(\bx_0\mid \by),\quad k=1,\dots,K,\\
a{(k)} &= \mathrm{Extract}\!\left(\bx_0{(k)}\right),\qquad
a^\star = \mathrm{mode}\bigl(\{a{(k)}\}_{k=1}^K\bigr),\\
r_{\mathrm{cons}}\!\left(\by,\bx_0{(k)}\right) &= \mathbb{I}\!\left[a{(k)} = a^\star\right].
\end{aligned}
\end{equation}

\section{Implementation Details of Experiments}

\subsection{Baseline Settings}
\label{app:baseline}

For the Base method, we follow the original settings from \citep{yang2025qwen3} and \citep{nie2025large}. 

Best-of-N is naturally integrated into our ETS framework as a special case, with detailed hyperparameters provided in Appendix~\ref{app:hyperparameter}.

For Beam Search, we use the standard implementation in the transformers \citep{wolf2020transformers} to evaluate ARMs with original temperature $t=0.7$ (refer to Appendix \ref{app:temp}), leveraging its parallel decoding via batching. For DLMs, we implement beam search ourselves; however, due to their iterative generation nature, DLMs cannot be accelerated via batching in the same way as ARMs.

For Power Sampling \citep{karan2025reasoning}, we retain the original $\alpha=0.25,N_{\mathrm{MCMC}}=10$ and $B = d_x / 16$ settings, but reduce the generation length $d_x$ from 3072 to 512 due to its \emph{substantial} latency of $(N_{\mathrm{MCMC}} d_x) / (4B) = 40\times$, which places it in a different computational regime compared to other TTS methods. Furthermore, unlike general evalutaion frameworks such as lm-eval, Power Sampling requires a hand-crafted, task-specific verifier, which precludes its evaluation on GSM8K.

For the GRPO baseline, we post-trained Qwen3-1.7B and Qwen3-8B in the non-thinking mode on the MATH dataset for 205 training steps, with greedy decoding for evalution. For DLMs, we directly use the results from LLaDA-1.5 fine-tuned with VRPO.

\subsection{Hyperparameter Settings of ETS}
\label{app:hyperparameter}

Table~\ref{tab:qwen-1.7b params}, \ref{tab:qwen-8b params} and \ref{tab:llada params} list the hyperparameters used in the experiments summarized in Table~\ref{tab:results}. For ARMs, we configure the generation length $d_
x=512$ (ablation study in Appendix \ref{app:length}). For DLMs, we retain the original dynamic generation lengths, which vary across datasets.
The selection of hyperparameters follows the findings from the ablation study in Section~\ref{subsec:ablation} and Appendix \ref{app:results}. For the KL regularization coefficient $\lambda$, we fix it to $0.1$.

Notably, ETS-IS achieves comparable or even superior accuracy-latency trade-offs to the original ETS under certain hyperparameter settings.
Our hyperparameter selection thus proceeds in two stages: we first identify a strong ETS configuration, then adjust it with reference to the latency achieved by Best-of-N while preserving accuracy.
For ARMs, we employ Qwen3‑1.7B as $p_{\rm small}$ to accelerate energy estimation for Qwen3‑8B.
We do not evaluate ETS‑IS on Qwen3‑1.7B, because its base inference speed is already sufficiently fast and the extra overhead of computing logits for importance sampling would outweigh any acceleration benefit.
For DLMs, the introduction of Fast-dLLM provides substantial acceleration, so we directly adopt the ETS hyperparameters.

\begin{table}[h]
\centering
\caption{Hyperparameter settings in Table \ref{tab:results} for Qwen3-1.7B model. $t$ is the \emph{softmax temperature} for LLM generation. $N_{\rm tokens}/d_x$ provides an upper bound on the \emph{multiplicative latency} relative to a standard single-pass (base) inference, which has $N_{\rm tokens} / d_x = 1$.}
\label{tab:qwen-1.7b params}
\resizebox{0.6\linewidth}{!}{
\begin{tabular}{l|l|ccccccc}
\toprule
Methods & Benchmarks & $M$ & $K$ & $I$ & $B$ & $t$  & $N_{\rm tokens} / d_x$ & Time \\
\midrule
\multirow{4}{*}{Beam Search}       
& MATH500   & 20 & - & - & - & 0.7 & 20 & 1.47$\xx$ \\
& GSM8K     & 20 & - & - & - & 0.7 & 20 & 1.63$\xx$ \\
& GPQA      & 20 & - & - & - & 0.7 & 20 & 1.89$\xx$ \\
& HumanEval & 20 & - & - & - & 0.7 & 20 & 1.24$\xx$ \\
\midrule
\multirow{4}{*}{Best-of-N} 
& MATH500   & 50 & 1 & 1 & 512 & 0.7  & 50 & 4.73$\xx$\\
& GSM8K     & 20 & 1 & 1 & 512 & 1.5  & 20 & 2.97$\xx$\\
& GPQA      & 30 & 1 & 1 & 512 & 0.5  & 30 & 3.87$\xx$\\
& HumanEval & 50 & 1 & 1 & 512 & 0.25 & 50 & 3.30$\xx$ \\
\midrule
\multirow{4}{*}{ETS}       
& MATH500   & 15 & 3 & 8 & 64 & 0.7  & 172.5 & 8.94$\xx$ \\
& GSM8K     & 15 & 3 & 8 & 64 & 1.5  & 172.5 & 6.92$\xx$ \\
& GPQA      & 5 & 3 & 8 & 64 & 1.5  & 57.5 & 5.94$\xx$ \\
& HumanEval & 10 & 10 & 8 & 64 & 0.25 & 360 & 5.53$\xx$ \\
\bottomrule
\end{tabular}}
\end{table}

\begin{table}[h]
\centering
\caption{Hyperparameter settings in Table \ref{tab:results} for Qwen3-8B model. $t$ is the \emph{softmax temperature} for LLM generation. $N_{\rm tokens}/d_x$ provides an upper bound on the \emph{multiplicative latency} relative to a standard single-pass (base) inference, which has $N_{\rm tokens} / d_x = 1$.}
\label{tab:qwen-8b params}

\resizebox{0.6\linewidth}{!}{
\begin{tabular}{l|l|ccccccc}
\toprule
Methods & Benchmarks & $M$ & $K$ & $I$ & $B$ & $t$  & $N_{\rm tokens} / d_x$ & Time \\
\midrule
\multirow{4}{*}{Beam Search}       
& MATH500   & 20 & - & - & - & 0.7 & 20 & 1.57$\xx$\\
& GSM8K     & 20 & - & - & - & 0.7 & 20 & 1.88$\xx$ \\
& GPQA      & 50 & - & - & - & 0.7 & 50 & 4.48$\xx$ \\
& HumanEval & 20 & - & - & - & 0.7 & 20 & 1.18$\xx$ \\
\midrule
\multirow{4}{*}{Best-of-N} 
& MATH500   & 50 & 1 & 1 & 512 & 0.7  & 50 & 6.09$\xx$ \\
& GSM8K     & 30 & 1 & 1 & 512 & 1.5  & 30 & 4.89$\xx$ \\
& GPQA      & 30 & 1 & 1 & 512 & 1.5  & 30 & 5.48$\xx$ \\
& HumanEval & 50 & 1 & 1 & 512 & 0.25 & 50 & 4.09$\xx$ \\
\midrule
\multirow{4}{*}{ETS}       
& MATH500   & 10 & 3 & 8 & 64 & 0.7  & 115 & 9.85$\xx$\\
& GSM8K     & 15 & 3 & 4 & 128 & 1.5  & 82.5 & 6.21$\xx$ \\
& GPQA      & 15 & 3 & 8 & 64 & 0.25 & 172.5 & 5.71$\xx$ \\
& HumanEval & 20 & 3 & 8 & 64 & 0.25 & 230 & 4.67$\xx$ \\
\midrule
\multirow{4}{*}{ETS-IS}       
& MATH500   & 15 & 3 & 4 & 128 & 0.7  & 82.5 & 5.88$\xx$\\
& GSM8K     & 15 & 3 & 4 & 128 & 1.5  & 82.5 & 5.33$\xx$ \\
& GPQA      & 15 & 3 & 8 & 64 & 0.25 & 172.5 & 5.50$\xx$ \\
& HumanEval & 10 & 3 & 8 & 64 & 0.25 & 115 & 2.55$\xx$ \\
\bottomrule
\end{tabular}}
\end{table}

\begin{table}[h]
\centering
\caption{Hyperparameter settings in Table \ref{tab:results} for LLaDA-8B-Instruct model. $d_x$ is the answer length that LLaDA is allowed to generate for the final output sequence. Block length denotes the number of tokens generated per block under LLaDA’s semi-autoregressive strategy. $t$ is the \emph{softmax temperature} for LLM generation. $N_{\rm tokens}/d_x$ provides an upper bound on the \emph{multiplicative latency} relative to a standard single-pass (base) inference, which has $N_{\rm tokens} / d_x = 1$.}
\label{tab:llada params}
\resizebox{0.65\linewidth}{!}{
\begin{tabular}{l|l|ccccccccc}
\toprule
Methods & Benchmarks & $d_x$ & Block length & $M$ & $K$ & $I$ & $B$ & $t$  & $N_{\rm tokens} / d_x$ & Time \\
\midrule
\multirow{4}{*}{Beam Search} 
& MATH500   & 256 & 32 & 20 & - & - & - & 0.5 & 20 & 9.38$\xx$\\
& GSM8K     & 256 & 8 & 20 & - & - & - & 1.0 & 20 & 22.5$\xx$\\
& GPQA      & 64 & 8 & 20 & - & - & - & 0.5 & 20 & 13.9$\xx$\\
& HumanEval & 512 & 32 & 20 & - & - & - & 0.25 & 20 & 10.5$\xx$\\
\midrule
\multirow{4}{*}{Best-of-N} 
& MATH500   & 256 & 32 & 20 & 1 & 1 & 256 & 0.5  & 20 & 14.9$\xx$\\
& GSM8K     & 256 & 8 & 15 & 1 & 1 & 256 & 1.0  & 15 & 14.7$\xx$\\
& GPQA      & 64 & 8 & 20 & 1 & 1 & 128 & 0.5  & 20 & 22.9$\xx$\\
& HumanEval & 512 & 32 & 20 & 1 & 1 & 512 & 0.25 & 20 & 24.5$\xx$\\
\midrule
\multirow{4}{*}{ETS}       
& MATH500   & 256 & 32 & 15 & 1 & 4 & 64 & 0.5  & 37.5 & 20.6$\xx$\\
& GSM8K     & 256 & 8 & 10 & 1 & 4 & 64 & 1.0  & 25 & 30.9$\xx$\\
& GPQA      & 64 & 8 & 5 & 3 & 8 & 16 & 0.5  & 57.5 & 57.1$\xx$\\
& HumanEval & 512 & 32 & 5 & 3 & 8 & 64 & 0.25 & 57.5 & 47.7$\xx$\\    
\midrule
\multirow{4}{*}{ETS-IS}       
& MATH500   & 256 & 32 & 5 & 3 & 8 & 32 & 0.5  & 57.5 & 7.24$\xx$\\
& GSM8K     & 256 & 8 & 10 & 1 & 4 & 64 & 1.0  & 25 & 17.2$\xx$\\
& GPQA      & 64 & 8 & 5 & 3 & 8 & 16 & 0.5  & 57.5 & 19.1$\xx$\\
& HumanEval & 512 & 32 & 5 & 3 & 8 & 64 & 0.25 & 57.5 & 21.6$\xx$\\    
\bottomrule
\end{tabular}}
\end{table}


\section{Additional Experimental Results} 
\label{app:results}
\subsection{More Results on DLMs}
Following the results in Section \ref{subsec:ablation}, we conduct similar ablation studies on DLMs, including the variations over $M \times K$, guidance step $I$, and accuracies under various latencies. For time considerations, we use 200 randomly selected examples from the GSM8K dataset for this ablation on LLaDA.

As shown in Figure~\ref{fig:MK_DLM} and Figure~\ref{fig:I_DLM}, we can obtain the same conclusions as in Section~\ref{subsec:ablation} on ARMs. Specifically, an appropriate choice of $M\times K$ can improve performance while maintaining low latency. Both accuracy and latency increase as $M\times K$ increases, and scaling $M$ is empirically more effective than scaling $K$.
For the guidance schedule, increasing the number of guide steps $I$ strengthens guidance but incurs a higher latency. Therefore, we recommend selecting $I$ to balance accuracy and computational efficiency.
For latency and accuracy, ETS achieves higher accuracy at comparable latency than other TTS baselines, and its advantage becomes more pronounced as the computational budget increases. 

\begin{figure}[htbp]
    \centering
    \includegraphics[width=0.8\linewidth]{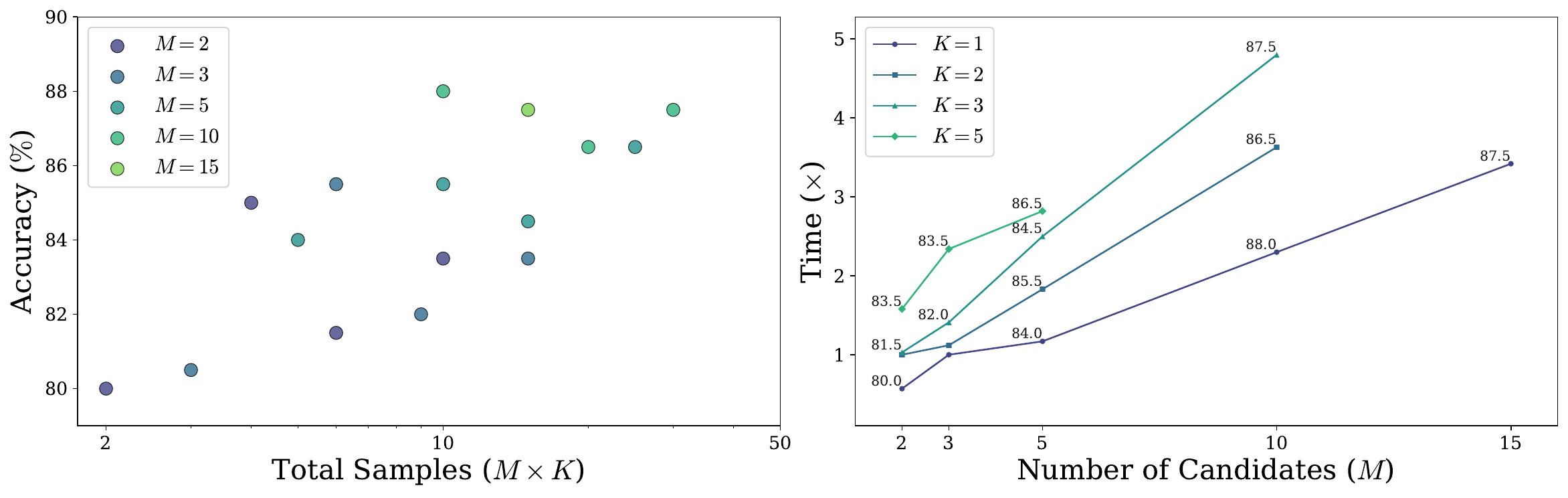}
    \caption{Effect of total samples on ETS. We ablate the hyerparameter settings with LLaDA-8B-Instruct and plot splited GSM8k accuracies (left) with corresponding latencies (right) for various sample counts (the accuracies are also reported).}
    \label{fig:MK_DLM}
\end{figure}
\begin{figure}[htbp]
    \centering
    \includegraphics[width=0.8\linewidth]{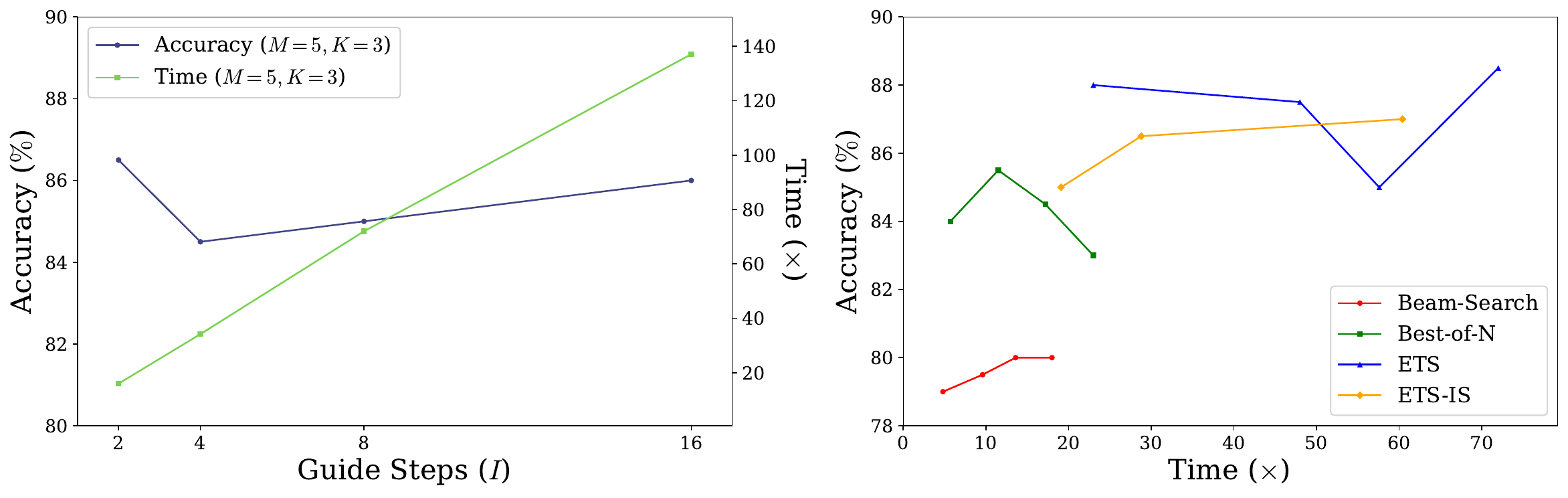}
    \caption{Left: Effect of guidance steps on ETS. We evaluate LLaDA-8B-Instruct on splited GSM8K with corresponding latencies are reported under various guidance steps. Right: Comparisons between TTS methods. We ablate the latencies under the LLaDA-8B-Instruct models evaluated on splited GSM8K for various TTS methods.}
    \label{fig:I_DLM}
\end{figure}

\subsection{Reward Design}
\label{app:reward}

In Figure~\ref{fig:reward}, we compares the correlations between the ground-truth reward and four self-evaluation reward candidates on GSM8K under Qwen3‑1.7B model. As can be seen, only the self-consistency reward used in our ETS exhibits a clear separation between the two classes: answers that agree with the consensus receive consistently high reward, whereas incorrect answers receive low reward. In contrast, the distributions of the other three rewards largely overlap for correct and incorrect responses, meaning their numerical values do not reliably indicate semantic accuracy. This result confirms that voting is uniquely effective as a self‑evaluation signal for energy estimation in reasoning tasks.

\begin{figure}[h!]
    \centering
    \includegraphics[width=0.8\linewidth]{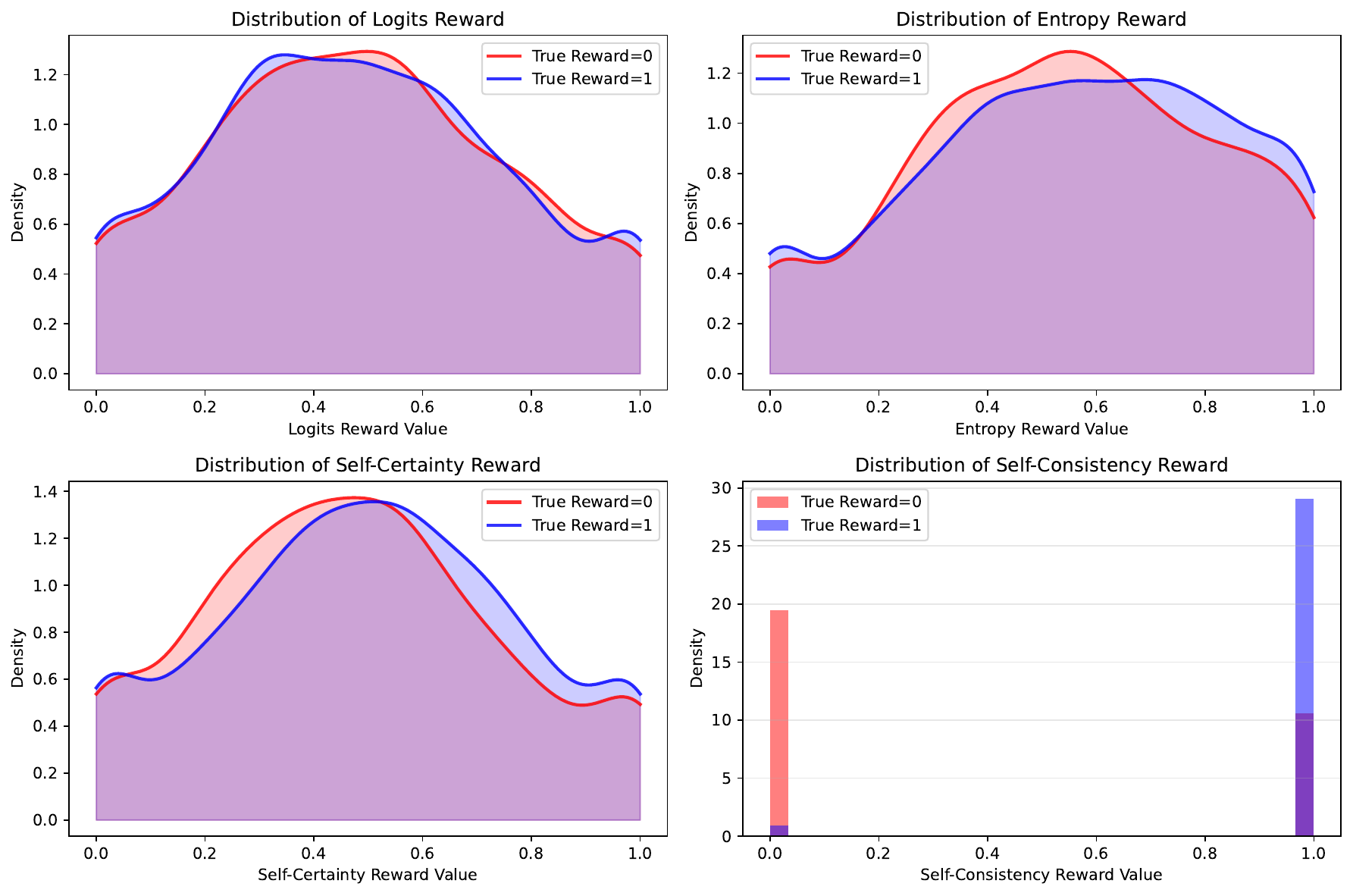}
    \caption{Comparison of self-evaluation rewards. We plot distributions of four rewards evaluated on GSM8K with Qwen3-1.7B ($60\%$ base accuracy), stratified by ground-truth reward (red: incorrect, blue: correct).}
    \label{fig:reward}
\end{figure}

To further quantify the effectiveness of the self-consistency reward, Table~\ref{tab:reward} reports the performance of Qwen3‑8B on GSM8K using both the self-consistency reward and the ground‑truth reward. While the ground‑truth reward naturally yields the best performance (e.g., pass@N accuracy of 98.10 for Best‑of‑N), our self‑consistency reward achieves competitive results (94.09 for Best‑of‑N and 94.22 for ETS). This demonstrates that self‑consistency serves as a practical, training‑free proxy that closely approximates the true reward signal, validating its use in ETS when the ground‑truth reward is unavailable during inference.

\begin{table}[h]
\centering
\caption{Effect of reward on ETS. We evaluate Qwen3-8B on GSM8K under various rewards.}
\small
\label{tab:reward}
\begin{tabular}{l|ccc}
\toprule
Reward           & Best-of-N      & ETS   & ETS-IS \\
\midrule
Logits           & 88.48	      & 88.70 & 88.40  \\
Entropy	         & 90.30	      & 89.54 & 88.93  \\
Self-Certainty   & 90.07          & 89.46 & 88.93  \\
Self-Consistency & 94.09          & 94.22 & 91.81  \\
Ground-Truth     & 98.10 (pass@N) & 97.57 & 95.00  \\
\bottomrule
\end{tabular}
\end{table}

\subsection{Temperature} \label{app:temp}

Following the settings in Appendix \ref{app:hyperparameter}, the temperature $t$ is adjusted based on the dataset and model characteristics: a lower temperature constrains sampling variability for models with lower base accuracy (or those that are only pre-trained), while a higher temperature promotes broader exploration for models with higher base accuracy (or post-trained models). Furthermore, a lower temperature encourages ARMs to early-stop more deterministically, thereby reducing the latency of ETS. However, a single temperature may not be optimal across all hyperparameter scales.

A notable exception is the GPQA dataset, where the verifier simply extracts the first capital letter from the response. As illustrated in Figure \ref{fig:t}, this narrow criterion does not diminish the benefits of temperature tuning at high temperatures; instead, it leads to comparable performance between $t=0.1$ and $t=1.5$, indicating that even high temperatures can yield gains in this setting.

\begin{figure}[h]
    \centering
    \includegraphics[width=0.8\linewidth]{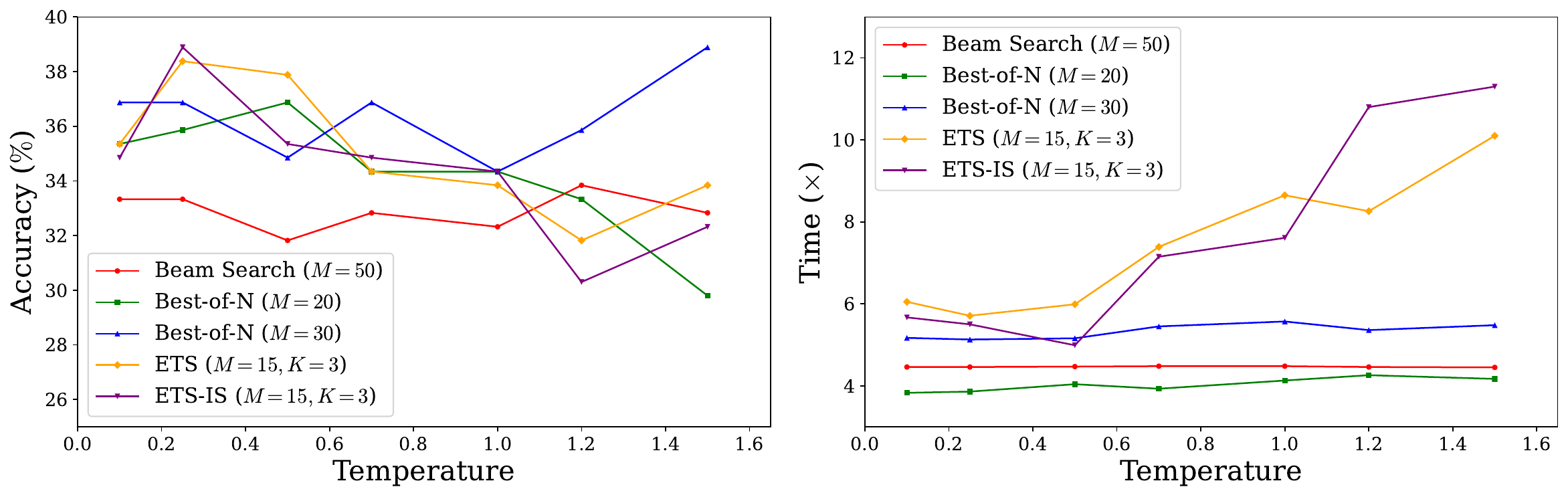}
    \caption{Effect of temperature on ETS. We ablate the temperature on Qwen3-8B and plot GPQA accuracies (left) with corresponding latencies (right).}
    \label{fig:t}
\end{figure}

Empirically, the optimal temperature is shared between Best-of-N and ETS with comparable latency \cite{chow2024inference}, while Beam Search is insensitive to temperature (so we fix $t=0.7$). Based on this, extensive experimental evidence suggests exploring $t$ within the range of $[0.25, 1.5]$.

\subsection{Generation Length}
\label{app:length}

We ablate the maximum generation length $d_x$ for reasoning sequences. Results in Table~\ref{tab:gen_length} show that increasing $d_x$ beyond $512$ tokens yields diminishing returns on most benchmarks (e.g., HumanEval). MATH500 is an exception, where longer sequences ($d_x=1024$) are beneficial, due to their more complex reasoning chains. Based on this efficiency trade-off, we fix $d_x=512$ for all main experiments on ARMs. For DLMs, we follow the original settings of LLaDA (in Table~\ref{tab:llada params}).

\begin{table}[h]
\centering
\caption{Performance across generation lengths. We ablate the $d_x$ on Qwen3-8B and bold the best accuracy value for each method across different generation lengths. }
\label{tab:gen_length}
\resizebox{\linewidth}{!}{
\begin{tabular}{l|cccccc|cccccc}
\toprule
\multirow{2}{*}{Length} & \multicolumn{6}{c|}{MATH500}                          & \multicolumn{6}{c}{HumanEval}                                \\ \cmidrule{2-13} 
                        & Base & Best-of-N & Beam Search & ETS  & ETS-IS & GRPO & Base  & Best-of-N      & Beam Search & ETS   & ETS-IS & GRPO  \\
                        \midrule
256                     & 47.2 & 49.8      & 44.6        & 55.6 & 49.4   & 56.0 & 60.98 & \textbf{67.07} & 67.68       & 68.90 & 62.80  & 62.80 \\
512 &
  60.0 &
  65.2 &
  61.2 &
  72.4 &
  66.2 &
  69.2 &
  58.54 &
  \textbf{67.07} &
  \textbf{69.51} &
  \textbf{71.34} &
  \textbf{68.9} &
  \textbf{65.85} \\
1024 &
  \textbf{67.2} &
  \textbf{76.2} &
  \textbf{69.8} &
  \textbf{76.8} &
  \textbf{75.0} &
  \textbf{77.0} &
  \textbf{61.59} &
  62.20 &
  68.29 &
  67.07 &
  60.37 &
  \textbf{65.85} \\
\bottomrule
\end{tabular}}
\end{table}

\subsection{Long-CoT Task}

To address the concern that intrinsic self-consistency may be noisy in hard tasks, we have extended our evaluation to AIME24 \citep{aime24} on Qwen3-8B, a challenging long-CoT benchmark. Using the same reward design, we observe consistent improvements in accuracy, confirming that our method remains effective even when reasoning chains are long (6144 tokens) and diverse.

Furthermore, for long‑CoT tasks, the high cost of $p_{\rm ref}$ limits the sample budget of standard TTS methods. In contrast, ETS‑IS achieves a large gain over ETS by exploiting $p_{\rm small}$ to explore many more total samples within a comparable latency budget. This greater exploration yields substantially higher accuracy, as shown in Table~\ref{tab:cot}.

\begin{table}[h]
\centering
\small
\caption{Performance on the AIME24 benchmark with Qwen3-8B (average over 32 samples), with best accuracy bolded.}
\label{tab:cot}
\begin{tabular}{l|cc}
\toprule
Method & AIME24 (avg@32) & Time (8$\times$GPU) \\
\midrule
Base & 16.25 & 1$\times$ \\
Beam Search & 16.46 & 1.64$\times$ \\
Best-of-N & 18.23 & 3.34$\times$ \\
ETS & 18.54 & 4.03$\times$ \\
ETS-IS & \textbf{22.40} & 2.82$\times$ \\
GRPO & 21.57 & 1$\times$ \\
\bottomrule
\end{tabular}
\end{table}

\subsection{ETS Asynchronous Pipeline}
\label{subsec:async_pipeline}

To fully exploit the parallel nature of ETS, we implement an asynchronous evaluation pipeline (Appendix~\ref{app:implementation}) that continuously dispatches generation tasks to GPU ranks, eliminating idle waiting periods common in synchronous evaluation loops (e.g., lm-eval).

As shown in Table~\ref{tab:sync_async}, the pipeline yields different speedups depending on the inherent parallelism of each method. ETS and ETS‑IS naturally generate many candidates and completions in parallel, allowing them to saturate the GPUs more effectively and thus obtain a larger relative speedup. In contrast, methods with lower parallelism (e.g., Base, Best‑of‑N) gain less acceleration. This asymmetry reflects algorithmic characteristics rather than unfairness. For a completely orthogonal assessment of algorithmic efficiency, we also report synchronous runtimes as a supplementary reference.

\begin{table}[h]
\centering
\small
\caption{Synchronous vs. asynchronous runtime on MATH500 with Qwen3‑8B. The asynchronous pipeline is applied uniformly to all methods. Gains are computed as $(1 - \text{Async}/\text{Sync})\times100\%$.}
\label{tab:sync_async}
\begin{tabular}{l|ccccc}
\toprule
\multirow{2}{*}{Method} & \multirow{2}{*}{Number of $\bx_0$} & \multicolumn{2}{c}{Time (8$\times$GPU)} & \multirow{2}{*}{Gain} \\
\cmidrule(lr){3-4}
& & Sync & Async & \\
\midrule
Base & 1 & 0:15:30 & 0:15:07 & 2.5$\%$ \\
Best-of-N & 50 & 0:40:23 & 0:37:26 & 7.5$\%$ \\
ETS & 115 & 1:32:05 & 1:02:32 & 32.6$\%$ \\
ETS-IS & 82.5 & 1:00:12 & 0:43:35 & 27.6$\%$ \\
\bottomrule
\end{tabular}
\end{table}

Accuracy remains unchanged between synchronous and asynchronous execution for all methods, confirming that the pipeline does not affect generation quality. The reported main results (Table~\ref{tab:results}) use the asynchronous pipeline for all TTS methods. 

\section{Generation Example of ETS}

Table \ref{tab:genex_part1} presents a step-by-step generation example of the ETS method applied to a GSM8K math problem by LLaDA-8B-Instruct. It illustrates the intermediate candidate completions, their corresponding energy scores, and the selection process across multiple guidance steps, culminating in the final answer. For every guidance step (1–6), the model produces five candidates (Candidate Index 1–5). For each candidate, a Monte Carlo procedure is applied to obtain three (x\_0) samples, from which final answers are extracted (shown in the “Extracted Answer” column). Taking guidance step 1 as an example, Candidate 1 yields the extracted answers (['\$90,0', '\$75,000', '\$90,000']), while Candidate 2 yields (['70000', '70000', '20000']), and so on for the remaining candidates. Within a given guidance step, the consensus answer is treated as the step-level target; here the consensus is ‘70000’. Rewards are then assigned by comparing each extracted answer to the consensus: answers matching the consensus receive reward 1, and mismatches receive reward 0. These rewards are aggregated into an energy that reflects the candidate’s relative preference under the ETS selection rule. Consequently, Candidate 1 in Step 1 receives rewards ([0,0,0]) and has Energy (=1), whereas Candidate 2 receives rewards ([1,1,0]) and attains a substantially stronger selection score (Energy (=14684.64) as reported). The “Selected Candidate” column records the candidate chosen for that step (Candidate 2 for Step 1), and the same consensus–reward–energy–selection mechanism is repeated identically for subsequent steps until the final solution is formed, culminating in the stated final answer 70000.

\begin{table}[p] 
\centering
\small
\setlength{\tabcolsep}{3pt}
\renewcommand{\arraystretch}{1.15}
\newlength{\LastColW}
\setlength{\LastColW}{\dimexpr\linewidth
-1cm-1.4cm-9.1cm-1.7cm-1.2cm-0.2cm
-10\tabcolsep
-6\arrayrulewidth\relax}

\caption{Step-by-step generation example of ETS on GSM8K using LLaDA-8B-Instruct. The table illustrates the intermediate candidate texts, their energy scores, and the selection process across multiple guidance steps ($I = 8$) for a math word problem. The response’s meaningful content is completed at guidance step 6, and subsequent steps produce only the \texttt{<|endoftext|>} token, which is omitted for space consideration. (Part 1 of 2)}
\label{tab:genex_part1}

\begin{tabular}{|
>{\centering\arraybackslash}m{1cm}|
>{\centering\arraybackslash}m{1.4cm}|
>{\raggedright\arraybackslash}m{9.1cm}|
>{\centering\arraybackslash}m{1.7cm}|
>{\centering\arraybackslash}m{1.2cm}|
>{\centering\arraybackslash}m{\LastColW}|
}
\hline
\multicolumn{6}{|
>{\raggedright\arraybackslash}m{\dimexpr\linewidth-2\tabcolsep\relax}|
}
{Question: Josh decides to try flipping a house. He buys a house for \$80,000 and then puts in \$50,000 in repairs. This increased the value of the house by 150\%. How much profit did he make?\textbackslash nAnswer:} \\
\hline
\textbf{Step} & \textbf{Candidate Index} & \multicolumn{1}{>{\centering\arraybackslash}m{9.1cm}|}{\textbf{Generated Text}} & \textbf{Extracted Answer} & \textbf{Energy} & \textbf{Selected Candidate} \\
\hline
\multirow{12}{*}{1}
& 1 & To determine Josh's profit, we need to follow these steps:\textbackslash n\textbackslash n1. Calculate the increased value of the house after the repairs.\textbackslash n2. Calculate & ['\$90,0', '\$75,000', '\$90,000'] & 1 & \multirow{12}{*}{2} \\ \cline{2-5}
& 2 & Josh bought the house for \$80,000 and put \$50,000 in repairs, so his total cost was \$80 & ['70000', '70000', '20000'] & 14684.64 & \\ \cline{2-5}
& 3 & To determine Josh's profit, we need to follow these steps:\textbackslash n\textbackslash n1. Calculate the increased value of the house after the repairs.\textbackslash n2. Sub & ['120,000', '45,000', '\$120, 000'] & 1 & \\ \cline{2-5}
& 4 & To determine the profit Josh made, we need to follow these steps:\textbackslash n\textbackslash n1. Calculate the increased value of the house after the repairs.\textbackslash n2. & ['120,000', '45,000', '\$120,000'] & 1 & \\ \cline{2-5}
& 5 & Josh bought the house for \$80,000 and spent \$50,000 on repairs, so his total cost was \$80 & ['5000', '195000', '70000'] & 7342.82 & \\
\hline
\multirow{12}{*}{2}
& 1 & ,000 + \$50,000 = \$130,000.\textbackslash nThe repairs increased the value of the house by & ['70000', '70000', '70000'] & 22026.47 & \multirow{12}{*}{1} \\ \cline{2-5}
& 2 & ,000 + \$50,000 = \$130,000.\textbackslash nThe repairs increased the value of the house by & ['10000', '70000', '65000'] & 7342.82 & \\ \cline{2-5}
& 3 & ,000 + \$50,000 = \$130,000.\textbackslash nThe repairs increased the value of the house by & ['70000', '70000', '70000'] & 22026.47 & \\ \cline{2-5}
& 4 & ,000 + \$50,000 = \$130,000.\textbackslash nHe sold the house for a value that increased & ['10000', '195000', '195000'] & 1 & \\ \cline{2-5}
& 5 & ,000 + \$50,000 = \$130,000.\textbackslash nThe repairs increased the value of the house by & ['70,000', '70000', '5000'] & 7342.82 & \\
\hline
\multirow{12}{*}{3}
& 1 & 150\%, so the increased value is 150\% of \$80,000, which is 1.5 * \$ & ['70000', '\$70,000', '70000'] & 14684.64 & \multirow{12}{*}{4} \\ \cline{2-5}
& 2 & 150\%, so the new value of the house was \$80,000 * 1.5 = \$120,0 & ['10000', '20000', '10000'] & 1 & \\ \cline{2-5}
& 3 & 150\%, so the new value of the house is \$80,000 * 1.5 = \$120,0 & ['100000', '2,000', '50000'] & 1 & \\ \cline{2-5}
& 4 & 150\%, so the house is now worth \$80,000 + (150/100) * \$80 & ['70000', '70000', '70000'] & 22026.47 & \\ \cline{2-5}
& 5 & 150\%, so the new value of the house is \$80,000 * 150/100 = \$1 & ['10000', '70000', '5000'] & 7342.82 & \\
\hline
\end{tabular}
\end{table}

\clearpage 

\begin{table}[t]
\centering
\small
\setlength{\tabcolsep}{3pt}
\renewcommand{\arraystretch}{1.15}
\setlength{\LastColW}{\dimexpr\linewidth
-1cm-1.4cm-9.1cm-1.7cm-1.2cm-0.2cm
-10\tabcolsep
-6\arrayrulewidth\relax}

\addtocounter{table}{-1} 
\caption{\textbf{(Continued)} Step-by-step generation example of ETS on GSM8K using LLaDA-8B-Instruct. (Part 2 of 2)}
\label{tab:genex_part2}

\begin{tabular}{|
>{\centering\arraybackslash}m{1cm}|
>{\centering\arraybackslash}m{1.4cm}|
>{\raggedright\arraybackslash}m{9.1cm}|
>{\centering\arraybackslash}m{1.7cm}|
>{\centering\arraybackslash}m{1.2cm}|
>{\centering\arraybackslash}m{\LastColW}|
}
\hline
\textbf{Step} & \textbf{Candidate Index} & \multicolumn{1}{>{\centering\arraybackslash}m{9.1cm}|}{\textbf{Generated Text}} & \textbf{Extracted Answer} & \textbf{Energy} & \textbf{Selected Candidate} \\
\hline
\multirow{12}{*}{4}
& 1 & ,000 = \$80,000 + \$120,000 = \$200,000.\textbackslash n & ['70000', '70000', '70000'] & 22026.47 & \multirow{12}{*}{5} \\ \cline{2-5}
& 2 & ,000 = \$80,000 + \$120,000 = \$200,000.\textbackslash n & ['70000', '70000', '70000'] & 22026.47 & \\ \cline{2-5}
& 3 & ,000 = \$80,000 + \$120,000 = \$200,000.\textbackslash n & ['70000', '70000', '70000'] & 22026.47 & \\ \cline{2-5}
& 4 & ,000 = \$80,000 + \$120,000 = \$200,000.\textbackslash n & ['70000', '70000', '70000'] & 22026.47 & \\ \cline{2-5}
& 5 & ,000 = \$80,000 + \$120,000 = \$200,000.\textbackslash n & ['70000', '70000', '70000'] & 22026.47 & \\
\hline
\multirow{12}{*}{5}
& 1 & He sold the house for \$200,000, so his profit is \$200,000 - \$130 & ['70000', '70000', '70000'] & 22026.47 & \multirow{12}{*}{4} \\ \cline{2-5}
& 2 & To find the profit, we subtract the total cost from the new value of the house: \$200,000 - \$130, & ['70000', '70000', '70000'] & 22026.47 & \\ \cline{2-5}
& 3 & To find the profit, we subtract his total cost from the new value of the house: \$200,000 - \$130, & ['70000', '70000', '70000'] & 22026.47 & \\ \cline{2-5}
& 4 & To find the profit, we subtract the total cost from the new value of the house: \$200,000 - \$130, & ['70000', '70000', '70000'] & 22026.47 & \\ \cline{2-5}
& 5 & To find the profit, we subtract the total cost from the new value of the house: \$200,000 - \$130, & ['70000', '70000', '70000'] & 22026.47 & \\
\hline
\multirow{12}{*}{6}
& 1 & ,000 = \$70,000.\textbackslash n\#\#\#\# 70000 & ['70000', '70000', '70000'] & 22026.47 & \multirow{12}{*}{2} \\ \cline{2-4}
& 2 & ,000 = \$70,000.\textbackslash n\#\#\#\# 70000 & ['70000', '70000', '70000'] & 22026.47 & \\ \cline{2-5}
& 3 & ,000 = \$70,000.\textbackslash n\#\#\#\# 70000 & ['70000', '70000', '70000'] & 22026.47 & \\ \cline{2-5}
& 4 & ,000 = \$70,000.\textbackslash n\#\#\#\# 70000 & ['70000', '70000', '70000'] & 22026.47 & \\ \cline{2-5}
& 5 & ,000 = \$70,000.\textbackslash n\#\#\#\# 70000 & ['70000', '70000', '70000'] & 22026.47 & \\
\hline
\multicolumn{6}{|
>{\raggedright\arraybackslash}m{\dimexpr\linewidth-2\tabcolsep\relax}|
}
{Final Answer: Josh bought the house for \$80,000 and put \$50,000 in repairs, so his total cost was \$80,000 + \$50,000 = \$130,000.\textbackslash nThe repairs increased the value of the house by 150\%, so the house is now worth \$80,000 + (150/100) * \$80,000 = \$80,000 + \$120,000 = \$200,000.\textbackslash nHe sold the house for \$200,000, so his profit is \$200,000 - \$130,000 = \$70,000.\textbackslash n\#\#\#\# 70000} \\
\hline
\end{tabular}
\end{table}

\end{document}